
\documentclass[conference,compsoc]{IEEEtran}
%


%

%
\ifCLASSOPTIONcompsoc
\IEEEoverridecommandlockouts
\usepackage[nocompress]{cite}
\else
\usepackage{cite}
\fi
\usepackage{textcomp}
\usepackage{stfloats}
\usepackage{url}
\usepackage{verbatim}
\usepackage{graphicx}
\usepackage{amsthm}
\usepackage{optidef}
\usepackage{subcaption}
\usepackage{xcolor}
\usepackage{times}
\usepackage{amsmath}
\usepackage{amssymb}
\usepackage{amsfonts}
\usepackage{algorithmic}
\usepackage{multirow}
\usepackage[linesnumbered,ruled]{algorithm2e}
\usepackage{hyperref}
\usepackage{enumitem}

\allowdisplaybreaks

\DeclareMathOperator{\Var}{Var}

\DeclareMathOperator{\Clip}{Clip}

\DeclareMathOperator{\Unif}{Unif}
\DeclareMathOperator{\range}{Range}

\newtheorem{thm}{Theorem}

\newtheorem{defi}{Definition}
\newtheorem{lem}{Lemma}

\newcommand{\norm}[1]{\left\lVert#1\right\rVert}

\captionsetup{compatibility=false}
\hypersetup{
	colorlinks,
	linkcolor={blue!70!green},
	citecolor={green!70!blue},
	urlcolor={orange!70!red}
}

\newcommand{\mypara}[1]{\noindent\textbf{#1.}\xspace}

%

%
\ifCLASSINFOpdf
\else
\fi
	\hyphenation{op-tical net-works semi-conduc-tor}

\begin{document}

\date{}

\title{\Large \bf Consistent Estimation of Numerical Distributions under Local Differential Privacy by Wavelet Expansion}

\author{\IEEEauthorblockN{Puning Zhao$^{1,4}$,
		Zhikun Zhang$^{2\#}$,
		Bo Sun$^2$, 
		Li Shen$^1$,
		Liang Zhang$^1$
		Shaowei Wang$^3$,
		Zhe Liu$^2$}
	\IEEEauthorblockA{$^1$ Shenzhen Campus of Sun Yat-sen University\ $^2$ Zhejiang University\ $^3$ Guangzhou University\\ $^4$ Guangdong Key Laboratory of Information Security Technology}
	\thanks{\# Corresponding authors}
}

\maketitle

\begin{abstract}
Distribution estimation under local differential privacy (LDP) is a fundamental and challenging task. Significant progresses have been made on categorical data. However, due to different evaluation metrics, these methods do not work well when transferred to numerical data. In particular, we need to prevent the probability mass from being misplaced far away. In this paper, we propose a new approach that express the sample distribution using wavelet expansions. The coefficients of wavelet series are estimated under LDP. Our method prioritizes the estimation of low-order coefficients, in order to ensure accurate estimation at macroscopic level. Therefore, the probability mass is prevented from being misplaced too far away from its ground truth. We establish theoretical guarantees for our methods. Experiments show that our wavelet expansion method significantly outperforms existing solutions under Wasserstein and KS distances.
\end{abstract}

\section{Introduction}

Local differential privacy (LDP) \cite{dwork2006calibrating} is the de facto standard for data privacy, which has been employed by a number of high-tech companies including Apple \cite{apple}, Google \cite{erlingsson2014rappor} and Microsoft \cite{ding2017collecting}. In these applications, a fundamental task is distribution estimation, which is not only directly applicable in its own right \cite{yang2024local}, but also serves as a building block for other tasks, such as range query \cite{gu2019supporting}. Consequently, in recent years, the design of efficient distribution estimators under LDP has received widespread attention \cite{erlingsson2014rappor,bassily2015local,kairouz2016discrete,wang2017locally,acharya2019hadamard,ye2018optimal,wang2020locally,fang2023locally}.

Most existing works for LDP frequency estimation focus on categorical data. However, in many practical applications, attributes are actually numerical, such as income, age, height, location, etc\cite{christin2016privacy}. The metrics for evaluating the estimation of numerical data are crucially different from those for categorical data. An example is shown in \autoref{fig:example}, in which there are two estimated probability density functions (pdf) $\hat{f}_1$ and $\hat{f}_2$ that both approximate a uniform distribution. Traditional metrics for categorical attributes such as $\ell_1$ distance and Kullback-Leibler divergence assign equal scores to $\hat{f}_1$ and $\hat{f}_2$. However, in practical applications, such as range query, $\hat{f}_2$ is considered as a significantly worse estimate than $\hat{f}_1$. As shown in \autoref{fig:example}, for $\hat{f}_2$, the probability mass is misplaced to a longer distance, thus the error of cumulative distribution function (cdf) estimate $\hat{F}_2$ is much larger than that of $\hat{F}_1$. As discussed in some recent works \cite{li2020estimating,li2021trade,du2024numerical,feldman2024instance}, Wasserstein distance \cite{rabin2012wasserstein} and Kolmogorov-Smirnov (KS) distance \cite{massey1951kolmogorov} are more suitable. The change of metrics poses challenge to the algorithm design. To minimize Wasserstein or KS distances, we need to avoid the misplacement of probability mass for long distances.

\begin{figure}[h!]
	\includegraphics[width=\linewidth]{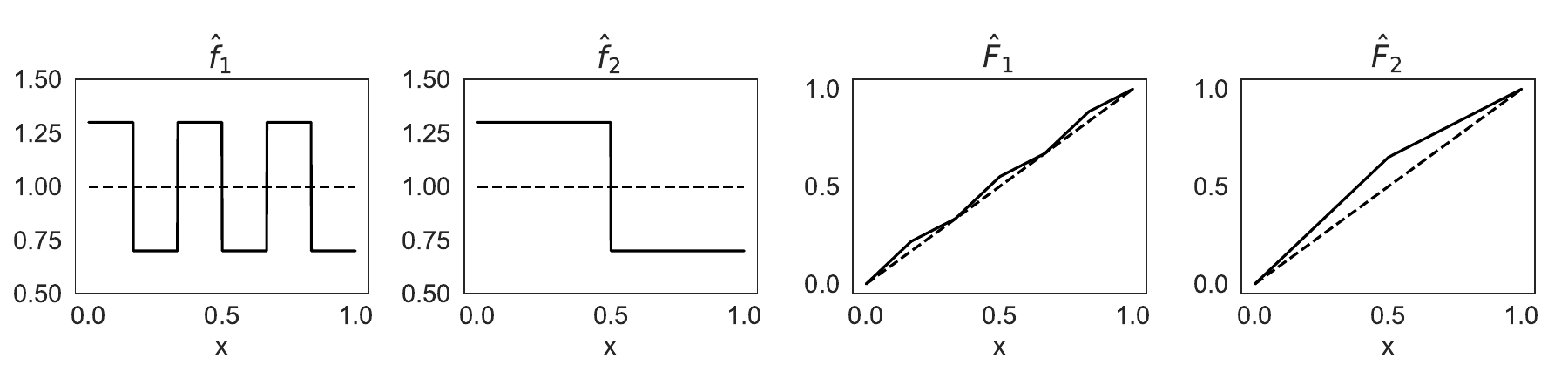}
	\caption{An example of two estimated distributions.}\label{fig:example}	
\end{figure}

\mypara{Existing solutions}
A solution that comes to mind immediately is to convert numerical variables to categorical variables by binning. Various standard frequency oracles can be used to estimate the probabilities of these bins, such as $k$-Randomized Response (kRR) \cite{warner1965randomized,kairouz2016discrete}, Rappor \cite{erlingsson2014rappor}, and Optimized Unary Encoding (OUE) \cite{wang2017locally}. However, the binning approach does not yield desirable performance. Intuitively, the error of cdf estimation increases with the accumulation of estimation errors for each bin. As a result, the final estimated distribution may misplace some probability mass far away from its ground truth, thereby resulting in large estimation error under Wasserstein or KS distances. Another drawback is that the optimal number of bins is unknown before observing the data, thus the parameter tuning introduces additional privacy cost.

Motivated by the drawbacks of categorical frequency oracles, in recent years, several new algorithms are proposed to handle numerical attributes in a better way. The general idea is to add noise to each sample to satisfy the LDP requirement, and then calculate the empirical distribution of these perturbed samples. After that, these algorithms attempt to recover the original distribution using some numerical tools. \cite{li2020estimating} proposes the square wave (SW) mechanism as the local randomizer, and then use Expectation-Maximization (EM) to recover the true distribution. \cite{fang2023locally} recovers the underlying distribution using some deconvolution algorithms based on Wiener filter. Although the sample distribution is blurred by the additive noise, the probability mass is misplaced to neighboring places. Therefore, compared with categorical frequency oracles, these methods yield more desirable results. However, for non-smooth distributions, such blurring inevitably results in the loss of some detailed information, which negatively impacts the overall accuracy. Moreover, the high computational overhead of recovery algorithms such as EM and deconvolution hinders their practical application.


\mypara{Our proposal} In this paper, we propose a wavelet expansion approach to frequency estimation under LDP. The wavelet basis consists of a series of wavelet functions. Wavelet system is a complete orthonormal basis, thus any distribution can be approximated arbitrarily close by wavelet expansion. The challenge is how to estimate the coefficients under LDP requirements. Since the privacy budget and the sample size are both limited, we can not get an accurate estimation of all coefficients. Therefore, we prioritize the estimation of low-order coefficients, as they have more impact on the final Wasserstein and KS distances. To be more precise, a wavelet expansion up to $J$-th order requires the coefficient estimation of $a_{jk}$ for $j\in \{0,1,\ldots, J\}$. Our algorithm ensures that the variance of $a_{jk}$ to be small with small $j$.

Intuitively, our method performs well because it aligns well with the goal of distribution estimation for numerical data. The wavelet expansion at low orders describes the macroscopic information of the distribution, while high-order coefficients describes the details. Since the estimation of $a_{jk}$ is sufficiently accurate for small $j$, only a small amount of probability mass is misplaced far away. If $j$ is large, then the estimation of $a_{jk}$ may be inaccurate, but coefficients with large $j$ are relatively less important, since the probability mass misplaced to the vicinity does not contribute significantly to the Wasserstein or KS distance. Moreover, compared with SW or filter based approaches, our method does not blur the distribution, thus for non-smooth distributions, the wavelet approach preserves the details in a better way, which further improves the accuracy.

We evaluate our method both theoretically and empirically. Given the sample size $n$, the privacy budget $\epsilon$ and the level $J$, our theoretical analysis gives an upper bound of the Wasserstein distance between estimated distribution and the ground truth. The optimal value of $J$ can then be obtained by minimizing the upper bound. Therefore, the theoretical analysis not only provides the convergence guarantee, but also provides a guideline for parameter tuning. The selection of $J$ does not introduce additional privacy cost, since it depends entirely on $n$ and $\epsilon$, which is known before observing the data. In general, $J$ is larger with large $n$ or $\epsilon$. We also run experiments on both synthesized and real data. The results show that our method consistently outperforms existing approaches, especially for non-smooth datasets. 

The contributions of this paper are summarized as follows:

\begin{itemize}
	\item We propose a wavelet expansion approach to estimating numerical distributions under LDP.
	
	\item We provide theoretical analysis, which shows that our method achieves consistent distribution estimation of numerical data under Wasserstein and KS distances. 
	
	\item We conduct extensive experiments on both synthesized and real-world datasets. The results validate the effectiveness of our proposed approach.
\end{itemize}

\mypara{Roadmap} In \autoref{sec:prelim}, we provide some background information. The exact problem definition and existing solutions are shown in \autoref{sec:formulation}. In \autoref{sec:method}, we show the proposed method. The theoretical analysis is then shown in \autoref{sec:theory}. After that, \autoref{sec:numerical} presents the results of numerical experiments. Finally, we discuss related work in \autoref{sec:related} and conclude the work in \autoref{sec:conc}.

\section{Preliminaries}\label{sec:prelim}

\subsection{Local Differential Privacy}
To begin with, we revisit the concept of LDP. Assume that there are $n$ users and one aggregator. Each user has a value $x\in \mathcal{D}$. To protect the privacy of users, $x$ is randomized before sending to the server. Denote $Q$ as the function for such randomization. LDP is defined as follows.

\begin{defi}\label{def:ldp}
	(LDP) A local randomizer $Q$ is $\epsilon$-LDP if for every $x$, $x'$ and $S\subseteq \range(Q)$,
	\begin{eqnarray}
		\text{P}(Q(x)\in S)\leq e^\epsilon \text{P}(Q(x')\in S),
	\end{eqnarray}
	in which $\range(Q)$ denotes the set of all possible outputs of $Q$. The randomness comes from the local randomizer $Q$.
\end{defi} 
The user does not report the local sample $x$ to the server. Instead, it only sends $Q(x)$, thus the privacy of users is protected even if the server is malicious.

\subsection{Wavelet}

Wavelets play an important role in audio and video signal processing \cite{zhang2019wavelet}. By applying wavelet transform, long signals can be compressed to much smaller ones, while still retaining enough information. Therefore, the restored signal is very close to the original one. 

A wavelet system consists of a \emph{scaling function} $\phi:[0,1]\rightarrow \mathbb{R}$ and a series of \emph{wavelet functions} $\{\psi_{jk}| j\in \{0,\ldots, J\},k\in \{0,\ldots, 2^J - 1 \}\}$. For all $j$ and $k$, $\psi_{jk}$ satisfies 
\begin{eqnarray}
	\psi_{jk}(x)=2^\frac{j}{2}\psi(2^j x-k),
	\label{eq:psijk}
\end{eqnarray}
in which $\psi:[0,1]\rightarrow \mathbb{R}$ is called \emph{mother wavelet function}.

A wavelet system in $L^2([0,1])$ is complete. Each function $f\in L^2([0,1])$ can be arbitrarily closely approximated by wavelet expansions. Moreover, a wavelet system is also orthonormal, i.e. 
\begin{eqnarray}
	\int_0^1 \psi_{jk}(x)\psi_{lm}(x)dx = \delta_{jl}\delta_{km},
	\label{eq:orthonormal}
\end{eqnarray}
in which $\delta$ is the Kronecker delta, such that $\delta_{jl}=1$ if $j=l$, otherwise $\delta_{jl} = 0$.

The \emph{wavelet expansion} of a function is
\begin{eqnarray}
	f_J(x) = \phi(x)+\sum_{j=0}^J \sum_{k=0}^{2^j - 1}a_{jk} \psi_{jk}(x).
	\label{eq:exp}
\end{eqnarray}

In this paper, for simplicity, we use the Haar wavelet. The scaling function and the mother wavelet function are defined as follows:
\begin{eqnarray}
	\phi(t) &=& \left\{
	\begin{array}{cc}
		1 &\text{if } 0\leq t\leq 1\\
		0 &\text{otherwise.}
	\end{array}
	\right.\\
	\psi(t) &=&\left\{
	\begin{array}{cc}
		1 &\text{if }  0\leq t <\frac{1}{2}\\
		-1 &\text{if }  \frac{1}{2}\leq t \leq 1\\
		0 &\text{otherwise.}
	\end{array}
	\right.
\end{eqnarray}
Correspondingly, $\psi_{jk}$ is defined according to \eqref{eq:psijk}, thus
\begin{eqnarray}
	\psi_{jk}(t)=\left\{
	\begin{array}{ccc}
		2^\frac{j}{2} &\text{if} & k2^{-j}\leq t<(k+\frac{1}{2})2^{-j}\\
		-2^\frac{j}{2} &\text{if} & (k+\frac{1}{2})2^{-j}\leq t\leq (k+1)2^{-j}\\
		0&\text{if} & t\notin [k2^{-j}, (k+1)2^{-j}].
	\end{array}
	\right.
	\label{eq:haarjk}
\end{eqnarray}
It is straightforward to verify that the Haar wavelet functions $\{\phi_{jk} |j\in \{1,\ldots, J\}, k\in \{0,\ldots, 2^j -1 \} \}$ satisfy the orthonormal condition \eqref{eq:orthonormal}.
\section{Problem Formulation and Existing Solutions}\label{sec:formulation}
\subsection{Problem Formulation}
This paper focus on distribution estimation problems for numerical data. Given a dataset which has $n$ samples $X_1,\ldots, X_n$, define the empirical cumulative distribution function (cdf) is defined as
\begin{eqnarray}
	F(x) = \frac{1}{n}\sum_{i=1}^n \mathbf{1}(X_i\leq x).
	\label{eq:cdfreal}
\end{eqnarray}
Our goal is to obtain an estimate $\hat{F}$ satisfying $\epsilon$-LDP.

For simplicity, in this paper, we assume that samples are all supported in a fixed domain $\mathcal{D}=[0,1]$. The analysis in this paper can be easily generalized to arbitrary distributions with bounded support.

Following \cite{li2020estimating}, we use the following two metrics to evaluate the quality of estimation.

\mypara{Wasserstein distance (aka. Earth mover distance)} Wasserstein distance arises from the idea of the optimal transport problem, which is one of foundational problems of optimization \cite{villani2009optimal}. Intuitively, Wasserstein distance is the minimum cost of moving the probability mass from one distribution to another. This paper uses $W_1$ distance in one dimension. Formally, for two distributions $P$ and $Q$, the Wasserstein distance is defined as
\begin{eqnarray}
	W(P,Q) = \inf_{\pi \in \mathcal{M}(P, Q)} \mathbb{E}_{(U, V)\sim \pi}[\norm{U-V}_1],
\end{eqnarray} 
in which $\mathcal{M}(P, Q)$ denotes the set of all joint distributions with marginals $P$ and $Q$, respectively.

For two one-dimensional distributions with cumulative distribution function (cdf) $F_1$ and $F_2$, it can be easily shown that the Wasserstein distance between them is
\begin{eqnarray}
	W(F_1, F_2) = \int |F_1(x)-F_2(x)| dx.
    \label{eq:Wdf}
\end{eqnarray}

\mypara{Kolmogorov-Smirnov (KS) distance} KS distance is the maximum difference of cdf between two distributions:
\begin{eqnarray}
	KS(F_1,F_2) = \sup_{x\in \mathcal{D}} |F_1(x)-F_2(x)|.
    \label{eq:KSdf}
\end{eqnarray}

Unlike traditional measures such as $\ell_1$ distance and KL divergence, Wasserstein and KS distances are determined not only by the amount of probability mass being misplaced, but also the distance of such misplacement. A large distance is heavily penalized. Therefore, Wasserstein and KS distances are more suitable for numerical data, as these two metrics reflect the ordinal nature of numerical data in a better way.  

\mypara{Relationship to range query problems} In range query problems, the goal is to estimate the fraction of samples falling in $[a,b]$. Wasserstein distance gives an upper bound of the average error of range query. It can be shown that if $a$ and $b$ are randomly generated from $[0,1]$ with a uniform distribution ($a$ and $b$ are swapped if $a>b$), given an estimated cdf $\hat{F}$, then the expectation of the absolute error of range query is no more than $2W(\hat{F}, F)$. KS distance reflects the maximum error of range query. Given an estimated cdf $\hat{F}$, it can be easily shown that the maximum absolute error of range query over all $a,b\in [0,1]$ does not exceed $2KS(\hat{F}, F)$.

\subsection{Categorical Frequency Oracles}\label{sec:cfo}
Categorical frequency oracles are algorithms for frequency estimation problem with the support $\mathcal{D}=[d]$ being a categorical domain, with $d$ being the alphabet size. Here we introduce two commonly used categorical frequency oracles.

\mypara{$k$-Randomized Response (kRR)} Given the input $x\in \mathcal{D}$, in which $\mathcal{D}=[d]$, the local randomizer generates output $Y$ according to the following distribution:
\begin{eqnarray}
	\text{P}(Y=j|x) = \left\{
	\begin{array}{cc}
		\frac{e^\epsilon}{e^\epsilon+d-1} &\text{if }  x = j\\
		\frac{1}{e^\epsilon+d-1} &\text{otherwise.}
	\end{array}
	\right.
\end{eqnarray}
Given $n$ input samples $x_1,\ldots, x_n$, the server receives the reported values $Y_1,\ldots, Y_n$. Then the frequency estimate is
\begin{eqnarray}
	\hat{\mu}_j = \frac{\frac{1}{n}\sum_{i=1}^n \mathbf{1}(Y_i=j) - q}{p-q},
\end{eqnarray}
for $j=1,\ldots, d$, in which $p=e^\epsilon/(e^\epsilon+d-1)$, and $q=1/(e^\epsilon+d-1)$. Theoretical analysis has shown that kRR is suitable for small $d$ or large $\epsilon$ \cite{wang2017locally,wang2018locally}. With large $d$ or small $\epsilon$, kRR is relatively inefficient.

\mypara{Optimized Unary Encoding (OUE)} Given $x\in [d]$, the local randomizer generates a vector $\mathbf{Y}\in \{0,1\}^d$ according to the following rule:
\begin{eqnarray}
	\text{P}(Y(j) = 1|x) = \left\{
	\begin{array}{cc}
		\frac{1}{2} & \text{if } x = j\\
		\frac{1}{e^\epsilon+1} &\text{otherwise,}
	\end{array}
	\right.
\end{eqnarray}
in which $Y(j)$ is the $j$-th element of $\mathbf{Y}$. Then the frequency estimate is
\begin{eqnarray}
	\hat{\mu}_j = \frac{\frac{1}{n}\sum_{i=1}^n Y(j) - q}{p-q},
\end{eqnarray}
in which $p=1/2$, $q=1/(e^\epsilon+1)$. The value of $p$ and $q$ are selected to minimize the variance, while ensuring that $\hat{\mu}_j$ is unbiased and satisfy $\epsilon$-LDP \cite{wang2017locally}. OUE is relatively more suitable for small $\epsilon$ or large $d$.

\mypara{Optimized Local Hashing (OLH) \cite{wang2017locally}} OLH randomizes samples with Hash functions. Compared with OUE, OLH has the same variance, lower communication cost but higher computation cost. We omit the details here.

\mypara{Subset selection \cite{ye2018optimal,wang2019local}} Given the input $x\in [d]$, the subset selection randomizer generates $\mathbf{Y}\in \{0,1\}^d$. It has an adjustable parameter $m$. Among these $d$ elements, $m$ of them have value $1$. The distribution is specified as follows:
\begin{eqnarray}
	\text{P}(\mathbf{Y}=\mathbf{y}|x=j)=\left\{
	\begin{array}{ccc}
		\frac{e^\epsilon}{\Omega} &\text{if} & y(j)=1\\
		\frac{1}{\Omega} &\text{if} & y(j)=0.
	\end{array}
	\right.
\end{eqnarray}
According to the theoretical analysis in \cite{ye2018optimal}, with optimally tuned $m$, the subset selection achieves optimal estimation error under all privacy regimes. The optimal value of $m$ is large under small $\epsilon$. With $\epsilon>\ln d$, the optimal $m$ becomes $1$, and the subset selection method reduces to kRR.

\mypara{Application of categorical frequency oracles in numerical domains} For cases with samples following a continuous distribution, a natural approach is to divide the domain into bins, and then estimate the frequency of each bin. The estimated cdf of the distribution can then be constructed by the estimated frequencies. To be more precise, define a function $c:[0,1]\rightarrow [d]$ that maps the continuous value into the index of the assigned bin. Denote $h$ as the bin size, then
	$c(x) = \left\lfloor x/h\right\rfloor$.
In other words, all samples within $[kh, (k+1)h)$ are assigned into the $k$-th bin, $k=1,\ldots, d$. After passing categorical frequency oracles like kRR and OUE, with some normalization step such as \cite{wang2020locally}, we get an estimated frequency $(\hat{\mu}_1,\ldots, \hat{\mu}_d)$. Then we can obtain the estimated cdf at the connecting points:
\begin{eqnarray}
	F(kh) = \sum_{j=1}^k \hat{\mu}_j.
	\label{eq:cdfbin}
\end{eqnarray} 
For $x\notin \{kh|k=1,\ldots, d \}$, $F(x)$ can be obtained by interpolation.

\mypara{Drawbacks of categorical frequency oracles with binning} For categorical data, kRR and OUE are well performed distribution estimators. However, due to the ordinal nature of numerical data, the accuracy of categorical frequency oracles with binning becomes undesirable. Recall \eqref{eq:cdfbin}, with large $k$, the estimation error of $F(kh)$ can be serious since the error $\hat{\mu}_j$ accumulates from $j=1$ to $k$. In particular, the estimation error of $F(1/2)$ can be quite large, which indicates that the probability mass is misplaced from the left half of the domain to the right, or vice versa. Such a misplacement of probability mass for a long distance can be serious for numerical data. As a result, the performance is not satisfactory under Wasserstein and KS distances. 

Moreover, the selection of bin sizes is hard and may introduce additional privacy cost. The estimation error comes from the bias due to grouping all samples within a bin together, as well as the variance from local randomization. A small bin size leads to small bias but larger variance, and vice versa. Therefore, the bin size needs to be selected to achieve a good tradeoff, and the optimal size depends on the smoothness of distribution, which is unknown in advance. If the bin size is tuned experimentally, then some additional privacy cost is inevitable.

\subsection{Randomization in Numerical Domain}
Motivated by the drawbacks of categorical frequency oracles, some recent works propose methods to exploit the ordinal nature of numerical data in a better way. An intuitive idea is to let each user add some noise to the value of local sample $x$ to satisfy the LDP requirement, and then calculate the empirical distribution of these noisy samples. The real distribution can then be inferred from the distribution of noisy samples. 

\mypara{Square wave (SW) \cite{li2020estimating}} The general idea is that for different $x$ and $x'$, \cite{li2020estimating} designs a local randomizer such that the Wasserstein distance between the output distributions given $x$ and $x'$ as input values are maximized, in order to make it easier to distinguish them. The conditional probability of output $y$ given input $x$ is constructed as follows.
\begin{eqnarray}
	f_{Y|X}(y|x) = \left\{
	\begin{array}{cc}
		p &\text{if } |y-x|\leq b\\
		q &\text{otherwise,}
	\end{array}
	\right.
\end{eqnarray}
in which $p=e^\epsilon /(2be^\epsilon+1)$, $q=1/(2be^\epsilon+1)$. In \cite{li2020estimating}, $b$ is selected to maximize the mutual information $I(Y;X)$ between the input and the output, in order to achieve minimal loss of information \cite{cover1999elements}. As a result, the optimal $b$ is
\begin{eqnarray}
	b = \frac{\epsilon e^\epsilon - e^\epsilon+1}{2e^\epsilon (e^\epsilon - 1 - \epsilon)}.
\end{eqnarray}

The randomization with SW mechanism makes the distribution more blurred. To recover the ground truth, \cite{li2020estimating} conducts a post-processing with EM algorithm. 

\mypara{Wiener filter \cite{fang2023locally}} This approach adds a fixed noise $\mathbf{n}$ to each sample to satisfy the $\epsilon$-LDP requirements. After adding noise, the sample distribution becomes the convolution of the original distribution with a fixed kernel vector $h$, which is determined by the noise. The final step is to recover the original distribution using a deconvolution algorithm. In \cite{fang2023locally}, two algorithms are designed, including the Direct Wiener (DW) filter and Improved Iterative Wiener (IIW) filter.

\mypara{Drawbacks of numerical randomization approaches} Firstly, the recovery of original distribution from the distribution of randomized samples is computationaly expensive. With samples divided into $d$ bins, each iteration of EM algorithm requires $O(d^3)$ time. Secondly, these approaches exhibits worse performance for non-smooth distributions. Intuitively, the distribution of samples is blurred after adding noise. Therefore, for non-smooth distributions, a lot of detailed information will be lost due to the LDP mechanism. 

\section{Our Proposal}\label{sec:method}

\subsection{Intuition}

We propose a wavelet expansion approach. As discussed earlier, a wavelet system $\{\psi_{jk}\}_{j,k\in \mathbb{Z}}
$ forms a complete and orthonormal basis for $L^2([0,1])$. Therefore, the real distribution can be arbitrarily closely approximated by wavelet series. It remains to estimate the wavelet coefficients.

The sample size and the privacy budget are both limited, thus we can not expect to achieve an accurate estimation of all coefficients. Our method prioritizes low-order coefficients estimates, while the accuracy of high-order coefficients can be appropriately sacrificed. Low-order coefficients are more important, because an inaccurate estimation of low-order coefficients results in the misplacement of probability mass with larger distance. 

The whole procedures of the proposed method are summarized in Algorithm \ref{alg:all}.
\begin{algorithm}[t]
	\caption{The Wavelet Expansion Algorithm for Frequency Estimation}\label{alg:all}
	\textbf{Input:} Dataset $x_1, \ldots, x_n$\\
	\textbf{Output:} Estimated pdf $\hat{f}$\\
	\textbf{Parameter:} $J$
	\begin{algorithmic}[1]
		\STATE For $j=0,\ldots, T$, construct $S_j=\emptyset$
		\STATE Allocate samples into $S_0,\ldots, S_j$ following rules described in \autoref{sec:allocation}
		\FOR{$j=0,\ldots, j$}
		\FOR{$i\in S_j$}
		\STATE \emph{Encoding:} Calculate $\mathbf{v}_i$ with \eqref{eq:encode}
		\STATE Calculate $m$ from \eqref{eq:mopt}, in which $p$, $q$ and $\Omega$ are calculated using Lemma \ref{lem:normalize} and \ref{lem:pq}
		\STATE \emph{Perturbation:} Calculate $\mathbf{Y}_i$ according to Algorithm \ref{alg:perturb}, with input $\mathbf{v}_i$ and parameter $d=2^j$ and $m$
		\ENDFOR
		\STATE \emph{Aggregation:} Calculate $a_{jk}$ using \eqref{eq:ajk}
		\STATE Calculate $f_J$ according to \eqref{eq:fj}
		\ENDFOR
		\RETURN $\hat{f}=f_J$
	\end{algorithmic}
\end{algorithm}
\subsection{Wavelet Expansion}
To begin with, we discuss the wavelet expansion of the cdf of all samples without LDP requirements. After that, we discuss how to approximate such expansion under $\epsilon$-LDP.

Let $\psi_{jk}$ be the Haar wavelet function defined in \eqref{eq:haarjk}. The wavelet expansion of a pdf $f$ using Haar basis up to the $J$-th order can be expressed as follows:
\begin{eqnarray}
	f_J^*(x)=1+\sum_{j=0}^J \sum_{k=0}^{2^j - 1}a_{jk}^*\psi_{jk}(x),
	\label{eq:fjstar}
\end{eqnarray}
in which the coefficient $a_{jk}^*$ is
\begin{eqnarray}
	a_{jk}^* = \frac{1}{n}\sum_{i=1}^n \psi_{jk} (X_i).
	\label{eq:ajkideal}
\end{eqnarray}
Moreover, the corresponding cdf is
\begin{eqnarray}
	F_J^*(x) = \int_0^x f_J^*(t) dt.
	\label{eq:FJstar}
\end{eqnarray}
It can be shown that the distribution constructed above is close to the ground truth. The precise statement is shown in Lemma \autoref{lem:approx}.
\begin{lem}\label{lem:approx}
	For $k=0,1,\ldots, 2^{J+1}$, with $a_{jk}^*$ specified in \eqref{eq:ajkideal}, $F_J^*(k2^{-(J+1)})=F(k2^{-(J+1)})$.
\end{lem}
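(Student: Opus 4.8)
The plan is to show that at a dyadic knot $x=k2^{-(J+1)}$ the wavelet reconstruction $F_J^*$ coincides with the empirical cdf $F$ of \eqref{eq:cdfreal}, and the whole argument rests on a single \emph{reproduction property}: for such $x$, the step function $u_x(t):=\mathbf{1}(t\le x)$ is already resolved at scale $2^{-(J+1)}$, so its Haar partial sum up to level $J$ returns $u_x$ itself. Indeed, $u_x$ has a single jump, located exactly on the level-$(J+1)$ grid, hence it is constant on every dyadic interval $[m2^{-(J+1)},(m+1)2^{-(J+1)})$ and therefore lies in the span of $\{\phi\}\cup\{\psi_{jk}:0\le j\le J,\,0\le k\le 2^j-1\}$; projecting it onto this span leaves it unchanged.

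First I would integrate \eqref{eq:fjstar} term by term as in \eqref{eq:FJstar}, which gives
\begin{equation}
  F_J^*(x)=\int_0^x 1\,dt+\sum_{j=0}^J\sum_{k=0}^{2^j-1}a_{jk}^*\Psi_{jk}(x)=x+\sum_{j,k}a_{jk}^*\Psi_{jk}(x),
\end{equation}
where $\Psi_{jk}(x):=\int_0^x\psi_{jk}(t)\,dt$. The key identification is that $\Psi_{jk}(x)=\int_0^1 u_x(t)\psi_{jk}(t)\,dt$ and $x=\int_0^1 u_x(t)\,dt$ are exactly the wavelet and scaling coefficients of $u_x$, so $x+\sum_{j,k}\Psi_{jk}(x)\psi_{jk}(\cdot)$ is precisely the Haar partial sum of $u_x$. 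By the reproduction property this partial sum equals $u_x$, i.e.
\begin{equation}
  \sum_{j,k}\Psi_{jk}(x)\psi_{jk}(t)=\mathbf{1}(t\le x)-x
\end{equation}
for every $t$ off the grid $\{m2^{-(J+1)}\}$.

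Finally I would insert $a_{jk}^*=\tfrac1n\sum_{i=1}^n\psi_{jk}(X_i)$ from \eqref{eq:ajkideal}, exchange the two finite sums, and apply the identity at each $t=X_i$:
\begin{eqnarray}
  F_J^*(x)&=&x+\frac1n\sum_{i=1}^n\sum_{j,k}\psi_{jk}(X_i)\Psi_{jk}(x)\nonumber\\
  &=&x+\frac1n\sum_{i=1}^n(\mathbf{1}(X_i\le x)-x)=\frac1n\sum_{i=1}^n\mathbf{1}(X_i\le x)=F(x).
\end{eqnarray}
The only delicate point, which I regard as the main obstacle, is the boundary convention: the identity was justified off the dyadic grid, so evaluating it at $t=X_i$ is legitimate only when no sample equals $x=k2^{-(J+1)}$ exactly. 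This holds automatically under the paper's standing assumption of a continuous distribution on $[0,1]$, since such ties occur with probability zero. An equivalent derivation that sidesteps pointwise evaluation altogether is to observe that the Haar projection makes $f_J^*$ constant on each half-open interval $[m2^{-(J+1)},(m+1)2^{-(J+1)})$, equal to $2^{J+1}/n$ times the number of samples it contains, and then to add up these interval frequencies for $m=0,\ldots,k-1$.
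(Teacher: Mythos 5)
Your proof is correct and takes a genuinely different route from the paper's. The paper argues by induction on $J$: the base case $J=0$ is a direct computation, and the inductive step splits the knots $k2^{-(J+2)}$ into even $k$ (where the newly added level-$(J+1)$ wavelets integrate to zero up to $x$, so $F_{J+1}^*=F_J^*$ there) and odd $k$ (where it exploits that $f_J^*$ is constant on each level-$(J+1)$ dyadic interval and computes the correction term explicitly). You instead use a projection/duality argument: the system $\{\phi\}\cup\{\psi_{jk}:0\le j\le J\}$ spans the space $V_{J+1}$ of functions piecewise constant at scale $2^{-(J+1)}$; the indicator $u_x=\mathbf{1}(\cdot\le x)$ belongs to that span when $x$ is a level-$(J+1)$ knot; and swapping the two finite sums converts $F_J^*(x)$ into the empirical average of the reproduced indicator evaluated at the samples. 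This is non-inductive, treats all $k$ (including the endpoints $k=0$ and $k=2^{J+1}$) uniformly, and generalizes immediately to any wavelet basis whose partial sums reproduce $V_{J+1}$, whereas the paper's computation is tied to explicit Haar integrals; the paper's induction, in exchange, stays entirely elementary and exposes the half-interval bookkeeping that reappears in its later variance analysis.

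One correction to the caveat you flag: the paper makes no continuity assumption on the data --- the lemma concerns the empirical cdf of $n$ arbitrary points in $[0,1]$ (the real datasets used are in fact spiky and discrete) --- so ties at dyadic knots cannot be dismissed as probability-zero events. This is a genuine boundary issue for the lemma itself: with $n=1$, $X_1=1/2$, $J=0$, one gets $a_{00}^*=\psi(1/2)=-1$, hence $F_0^*(1/2)=0\neq 1=F(1/2)$. However, this does not disadvantage your argument relative to the paper's: the paper's own base case silently identifies $\frac{1}{n}\sum_i\mathbf{1}(X_i<1/2)$ with $\frac{1}{n}\sum_i\mathbf{1}(X_i\le 1/2)$, i.e., it carries exactly the same gap without acknowledging it. Both proofs are valid under the implicit assumption that no sample lies exactly on the level-$(J+1)$ grid; your version at least states this explicitly.
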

The proof of Lemma \autoref{lem:approx} is shown in Appendix \ref{sec:approx}. Lemma \autoref{lem:approx} suggests that the cdf reconstructed from the wavelet expansion matches the empirical cdf at $x=k2^{-(J+1)}$ for all $k$. With the increase of $J$, $F_J^*$ matches $F$ at more points. Therefore, if we can calculate $a_{jk}^*$ accurately, then $F$ can be reconstructed at each $k2^{-(J+1)}$, $k=1,\ldots, 2^{J+1}-1$. However, $a_{jk}^*$ is unknown in practice. We need to estimate $a_{jk}^*$ from data under $\epsilon$-LDP.

To begin with, we conduct a user splitting. Samples are randomly divided into $J+1$ groups, named $S_0, \ldots, S_J$. $S_j$ is used for estimating $a_{jk}^*$, $k=0,\ldots, 2^j - 1$. Denote $n_j=|S_j|$ as the number of samples in $S_j$. Then $\sum_{j=0}^J n_j = n$. The precise values of $n_j$, i.e. the allocation of these $n$ samples, will be discussed in \autoref{sec:allocation}.

\subsection{Encoding}\label{sec:encoding}
Now we discuss the encoding step of the $j$-th order. For $i\in S_j$, we encode sample $X_i$ as follows:
\begin{eqnarray}
	\mathbf{v}_i = 2^{-j/2} (\psi_{j0}(X_i), \ldots, \psi_{j, 2^j - 1} (X_i)).
	\label{eq:encode}
\end{eqnarray}

For Haar wavelet, recall \eqref{eq:haarjk}, $\psi_{jk}(X_i)\in \{-2^{j/2}, 0, 2^{j/2}\}$. Therefore, each element of $\mathbf{v}_i$ belongs to $\{-1,0,1\}$. Moreover, \eqref{eq:haarjk} suggests that for any $t\in [0,1]$ $\psi_{jk}(t)\neq 0$ for only one $t$. Therefore, $\mathbf{v}_i$ has only one nonzero element with value being either $1$ or $-1$. Denote $\mathbf{e}_k=(0,\ldots, 1,\ldots, 0)$ as the unit vector whose $k$-th element is $1$, and
\begin{eqnarray}
	\mathcal{V} &=& \{\mathbf{e}_k|k\in\{0,\ldots, 2^j - 1\}\}\nonumber\\
	&&\cup \{-\mathbf{e}_k|k\in\{0,\ldots, 2^j - 1\}\}
\end{eqnarray}
as the input space that contains all encoded vectors, then $\mathbf{v}\in \mathcal{V}$. The set $\mathcal{V}$ contains $2^{j+1}$ elements in total. 

\subsection{Perturbation}
The goal of perturbation step is to generate random signals $\mathbf{Y}_i$ that satisfies the $\epsilon$-LDP requirement. Our encoding step ensures that $\mathbf{v}_i$ has only one nonzero element whose value is either $1$ or $-1$. 

A closely related problem is frequency estimation. As discussed in \autoref{sec:cfo}, there are various methods for frequency estimation. Among these methods, the subset selection method achieves optimal estimation error under all privacy budgets $\epsilon$. Compared with frequency estimation, our task is different since we allow $\mathbf{v}_i$ to have negative elements. Therefore, we make a generalization to the subset selection approach. A straightforward generalization is to convert $\mathcal{V}$ to $\{0,\ldots, 2^{j+1} - 1\}$, and just use the subset selection approach with $d=2^{j+1}$. In this work, we propose a refinement that has better efficiency.

Denote $Q: \mathcal{V} \rightarrow \mathcal{Y}$ as the randomizer, such that the perturbed outputs are $\mathbf{Y}_i=Q(\mathbf{v}_i)$. $Q$ is required to satisfy Definition \ref{def:ldp} under privacy budget $\epsilon$. $\mathcal{Y}$ is the output space of perturbation:
\begin{eqnarray}
	\mathcal{Y} = \left\{\mathbf{y}\in \{-1,0,1\}^d|\norm{\mathbf{y}}_0 = m \right\},
	\label{eq:yset}
\end{eqnarray}
in which $m$ is an adjustable parameter, and $d=2^j$ is the dimensionality of $\mathcal{V}$. Compared with the subset selection method for frequency estimation, the difference is that now $\mathbf{y}$ can take negative values. Now user generates feedback $\mathbf{Y}_i$ that contain $m$ responses given the input encoded vector $\mathbf{v}_i$. From \eqref{eq:yset}, the size of $\mathcal{Y}$ is
\begin{eqnarray}
	|\mathcal{Y}| = \binom{d}{m} 2^m.
\end{eqnarray}

Now it remains to specify the randomizer $Q$. Denote $p_Q$ as the probability mass function (pmf) of randomizer $Q$. $p_Q$ is constructed as follows.
\begin{eqnarray}
	p_Q(\mathbf{y}|\mathbf{v}) = \left\{
	\begin{array}{ccc}
		\frac{e^\epsilon}{\Omega} &\text{if} & \langle \mathbf{y}, \mathbf{v}\rangle = 1\\
		\frac{1}{\Omega} &\text{if} & \langle \mathbf{y}, \mathbf{v}\rangle \neq 1.		
	\end{array}
	\right.
	\label{eq:pq}
\end{eqnarray}
In \eqref{eq:pq}, $\Omega$ is the normalizer that ensures $\sum_{\mathbf{y}\in \mathcal{Y}} p_Q(\mathbf{y}|\mathbf{v}) = 1$. Now we calculate the normalizer $\Omega$. From \eqref{eq:pq}, it can be easily shown that the randomizer $Q$ satisfies $\epsilon$-LDP.

\begin{lem}\label{lem:normalize}
	\eqref{eq:pq} is normalized with
	\begin{eqnarray}
		\Omega = \binom{d-1}{m-1} 2^{m-1}e^\epsilon + \binom{d-1}{m-1}2^{m-1} + \binom{d-1}{m} 2^m.
		\label{eq:norm}
	\end{eqnarray}
\end{lem}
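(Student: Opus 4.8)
The plan is to treat $\Omega$ purely as the quantity forced by the normalization requirement, and to obtain it through a direct combinatorial count of how many outputs $\mathbf{y}\in\mathcal{Y}$ fall into each of the two branches of \eqref{eq:pq}. First I would sum \eqref{eq:pq} over all $\mathbf{y}\in\mathcal{Y}$, which gives
\[
\sum_{\mathbf{y}\in\mathcal{Y}} p_Q(\mathbf{y}\mid\mathbf{v}) = \frac{1}{\Omega}\bigl(e^\epsilon N_1(\mathbf{v}) + N_{\ne 1}(\mathbf{v})\bigr),
\]
where $N_1(\mathbf{v})=|\{\mathbf{y}\in\mathcal{Y}:\langle\mathbf{y},\mathbf{v}\rangle=1\}|$ and $N_{\ne 1}(\mathbf{v})$ is the complementary count. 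Setting this sum equal to $1$ shows $\Omega = e^\epsilon N_1(\mathbf{v}) + N_{\ne 1}(\mathbf{v})$, so the task reduces to evaluating these two counts and confirming that they are the same for every admissible input $\mathbf{v}$.

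Next I would exploit the structure of the input space $\mathcal{V}$. Every $\mathbf{v}\in\mathcal{V}$ is $\pm\mathbf{e}_k$ for a single index $k$, so $\langle\mathbf{y},\mathbf{v}\rangle = \pm y_k$ with $y_k\in\{-1,0,1\}$. Because negating the $k$-th coordinate of every vector is a bijection of $\mathcal{Y}$, and because permuting coordinates is likewise a bijection of $\mathcal{Y}$, the counts $N_1$ and $N_{\ne 1}$ do not depend on the sign of $\mathbf{v}$ nor on the index $k$. I would therefore fix $\mathbf{v}=\mathbf{e}_k$ without loss of generality and condition on the value of $y_k$.

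The main computation is then an elementary count in three cases. For $\langle\mathbf{y},\mathbf{v}\rangle=1$, i.e.\ $y_k=1$, the $k$-th coordinate is one of the $m$ nonzero entries, and the remaining $m-1$ nonzeros are placed among the other $d-1$ coordinates with arbitrary signs, giving $N_1=\binom{d-1}{m-1}2^{m-1}$. The complement splits into $y_k=-1$, which by the identical argument contributes $\binom{d-1}{m-1}2^{m-1}$, and $y_k=0$, in which case all $m$ nonzeros lie among the remaining $d-1$ coordinates, contributing $\binom{d-1}{m}2^m$. Substituting $N_1$ and $N_{\ne 1}$ into $\Omega=e^\epsilon N_1 + N_{\ne 1}$ reproduces \eqref{eq:norm} exactly.

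The step I expect to be the genuine (if modest) obstacle is the invariance claim in the second paragraph: one must verify that $N_1$ and $N_{\ne 1}$ are independent of the particular input $\mathbf{v}$, since only then is a single scalar $\Omega$ a legitimate normalizer simultaneously for all conditional distributions $p_Q(\cdot\mid\mathbf{v})$. Once the symmetry of $\mathcal{Y}$ under coordinate negation and permutation is established, the remaining binomial bookkeeping is routine.
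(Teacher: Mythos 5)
Your proposal is correct and follows essentially the same route as the paper's proof: reduce to $\mathbf{v}=\mathbf{e}_k$ without loss of generality, count the outputs $\mathbf{y}\in\mathcal{Y}$ with $\langle\mathbf{y},\mathbf{v}\rangle$ equal to $1$, $0$, and $-1$ as $\binom{d-1}{m-1}2^{m-1}$, $\binom{d-1}{m}2^m$, and $\binom{d-1}{m-1}2^{m-1}$ respectively, and weight the first count by $e^\epsilon$. Your explicit justification of the without-loss-of-generality step via sign-flip and permutation symmetries of $\mathcal{Y}$ is a point the paper leaves implicit, but it does not change the substance of the argument.
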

\begin{proof}
	Now we find $\Omega$ such that $\sum_{y\in \mathcal{Y}} p_Q(\mathbf{y}|\mathbf{v}) = 1$. Since $\mathbf{v}$ only has one nonzero element whose value is either $1$ or $-1$, according to \eqref{eq:yset}, $\langle \mathbf{y}, \mathbf{v}\rangle \in \{-1,-,1\}$. To calculate $\Omega$, we need to calculate the number of elements $\mathbf{y}$ in $\mathcal{Y}$ with $\langle \mathbf{y}, \mathbf{v}\rangle = 1$, $\langle \mathbf{y}, \mathbf{v}\rangle = 0$ and $\langle \mathbf{y}, \mathbf{v}\rangle = -1$, respectively.

Without loss of generality, suppose that $\mathbf{v}=\mathbf{e}_0$. For $\langle \mathbf{y}, \mathbf{v}\rangle = 1$, note that $\mathbf{y}$ has $m$ nonzero elements, and $y(1)=1$ is already nonzero, thus there are $m-1$ remaining nonzero elements in $d-1$ components. The values of these elements can be $1$ or $-1$. Therefore
\begin{eqnarray}
	\left| \left\{\mathbf{y}\in \mathcal{Y}|\langle \mathbf{y}, \mathbf{v}\rangle = 1 \right\}\right| &=& \binom{d-1}{m-1} 2^{m-1}.
\end{eqnarray}

For $\langle \mathbf{y}, \mathbf{v}\rangle = 0$, $m$ nonzero elements are allocated in $d-1$ components. The values of these $m$ elements are either $1$ or $-1$. Therefore
\begin{eqnarray}
	\left|\left\{\mathbf{y}\in \mathcal{Y}|\langle \mathbf{y}, \mathbf{v}\rangle = 0 \right\}\right| &=& \binom{d-1}{m} 2^m.	
\end{eqnarray}
Finally, the number of cases with $\langle \mathbf{y}, \mathbf{v}\rangle=-1$ is the same as such number with $\langle \mathbf{y}, \mathbf{v}\rangle = 1$. Thus
\begin{eqnarray}
	\left| \left\{\mathbf{y}\in \mathcal{Y}|\langle \mathbf{y}, \mathbf{v}\rangle = -1\right\} \right| &=& \binom{d-1}{m-1} 2^{m-1}.
\end{eqnarray}
From \eqref{eq:pq}, among all elements in $\mathcal{Y}$, elements with $\langle \mathbf{y}, \mathbf{v}\rangle = 1$ are assigned with probability $e^\epsilon/\Omega$, while other elements are assigned with $1/\Omega$. Hence we can derive the normalizer in \eqref{eq:norm}.
\end{proof}

From Lemma \ref{lem:normalize}, it can be observed that with $m=1$, $\Omega = e^\epsilon+2d-1$. The randomizer \eqref{eq:pq} reduces to kRR over $2d$ alphabet size. In this case, our method is actually the same as the simple generalization that converts $\mathcal{V}$ to $\{0,\ldots, 2^{j+1}-1\}$. If $m>1$, it can be shown that the normalizer of our method \eqref{eq:norm} is smaller than the simple generalization method, indicating that our method yields a smaller output space and is thus more efficient.

\subsection{Aggregation}
 Given some $j\in \{0,1,\ldots, J\}$, with $\mathbf{Y}_i$ for all $i\in S_j$, the goal is to obtain $a_{jk}$, $k=0,\ldots, 2^j-1$ that is as close to $a_{jk}^*$ as possible. To achieve an unbiased aggregation, we need to calculate some important probabilities. Define
\begin{eqnarray}
	p:= \text{P}(Y(k) = 1|v(k)=1),
\end{eqnarray}
and
\begin{eqnarray}
	q:=\text{P}(Y(k)=1|v(k)=0).
\end{eqnarray}
Then we show the following lemma.
\begin{lem}\label{lem:pq}
	Let $d=2^j$. For a randomizer $Q:\mathcal{V}\rightarrow \mathcal{Y}$ with pmf specified in \eqref{eq:pq}, the following facts hold:
	
	(1) The values of $p$ is
	\begin{eqnarray}
		p = \frac{1}{\Omega} \binom{d-1}{m-1} 2^{m-1} e^\epsilon;
		\label{eq:p}
	\end{eqnarray}
	
	(2) If $m=1$, then
	\begin{eqnarray}
		q = \frac{1}{\Omega};
		\label{eq:q1}
	\end{eqnarray}
	If $m\geq 2$, then
	\begin{eqnarray}
		q &=& \frac{1}{\Omega}\binom{d-2}{m-2} 2^{m-2}e^\epsilon + \frac{1}{\Omega} \binom{d-2}{m-2}2^{m-2} \nonumber\\
		&&+ \frac{1}{\Omega} \binom{d-2}{m-1}2^{m-1};
		\label{eq:q2}
	\end{eqnarray}
	
	(3) 
	\begin{eqnarray}
		\text{P}(Y(k)=1|v(k)=-1) = pe^{-\epsilon}.
	\end{eqnarray}
\end{lem}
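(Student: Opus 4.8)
The plan is to exploit the coordinate symmetry of the randomizer $Q$ and reduce each of the three probabilities to a combinatorial count over $\mathcal{Y}$, weighted by the two-valued pmf \eqref{eq:pq} and divided by $\Omega$ from Lemma \ref{lem:normalize}. The key simplification is that every $\mathbf{v}\in\mathcal{V}$ has a single nonzero coordinate, so that $\langle\mathbf{y},\mathbf{v}\rangle$ is governed entirely by the value of $\mathbf{y}$ at that one coordinate; since $p_Q$ depends on $\mathbf{y}$ only through $\langle\mathbf{y},\mathbf{v}\rangle$, which is invariant under coordinate permutations, I may fix a convenient representative for $\mathbf{v}$ in each part.

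For part (1), I would fix $\mathbf{v}=\mathbf{e}_0$ (so $v(0)=1$) and write $p=\text{P}(Y(0)=1)$ as the sum of $p_Q(\mathbf{y}\mid\mathbf{v})$ over all $\mathbf{y}\in\mathcal{Y}$ with $y(0)=1$. Each such $\mathbf{y}$ satisfies $\langle\mathbf{y},\mathbf{v}\rangle=y(0)v(0)=1$ and hence contributes $e^\epsilon/\Omega$; the number of admissible $\mathbf{y}$ is found by placing the remaining $m-1$ nonzero entries among the other $d-1$ coordinates with signs $\pm1$, giving $\binom{d-1}{m-1}2^{m-1}$, and multiplying yields \eqref{eq:p}. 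Part (3) is structurally identical with $\mathbf{v}=-\mathbf{e}_0$: now $\langle\mathbf{y},\mathbf{v}\rangle=y(0)v(0)=-1\neq1$ for the same $\binom{d-1}{m-1}2^{m-1}$ vectors, so each contributes $1/\Omega$ instead of $e^\epsilon/\Omega$; comparing the two expressions immediately gives the factor $e^{-\epsilon}$.

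Part (2) is the main obstacle, because the sign of the inner product is no longer constant across the admissible $\mathbf{y}$. I would fix $\mathbf{v}=\mathbf{e}_1$ (so $v(0)=0$, $v(1)=1$) and again sum over $\mathbf{y}$ with $y(0)=1$, now partitioning by the value of $y(1)\in\{-1,0,1\}$, which equals $\langle\mathbf{y},\mathbf{v}\rangle$. The cases $y(1)=1$ (weight $e^\epsilon/\Omega$) and $y(1)=-1$ (weight $1/\Omega$) each fix two nonzero coordinates and place the remaining $m-2$ signed nonzero entries among the other $d-2$ coordinates, giving $\binom{d-2}{m-2}2^{m-2}$ vectors apiece, while the case $y(1)=0$ (weight $1/\Omega$) fixes only $y(0)=1$ and places $m-1$ signed nonzero entries among $d-2$ coordinates, giving $\binom{d-2}{m-1}2^{m-1}$; summing the three contributions produces \eqref{eq:q2}. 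The boundary $m=1$ must be handled separately, since then no second nonzero coordinate exists: the only $\mathbf{y}$ with $y(0)=1$ is $\mathbf{e}_0$ itself, which has $y(1)=0$ and inner product $0$, yielding $q=1/\Omega$ as in \eqref{eq:q1} (consistent with \eqref{eq:q2} once the $\binom{d-2}{m-2}$ terms vanish). The subtleties to verify carefully are that these three subcases are exhaustive and disjoint, that the binomial coefficients count exactly the claimed signed placements, and that the value of $q$ is indeed independent of the sign and location of the nonzero entry of $\mathbf{v}$ — which holds because the counts for $y(1)=+1$ and $y(1)=-1$ are equal, so swapping which subcase carries the $e^\epsilon$ weight leaves the total unchanged.
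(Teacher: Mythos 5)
Your proposal is correct and follows essentially the same route as the paper's proof: fix the nonzero coordinate of $\mathbf{v}$, count the vectors $\mathbf{y}\in\mathcal{Y}$ with $y(k)=1$ partitioned by the value of the inner product $\langle\mathbf{y},\mathbf{v}\rangle$, and weight each class by $e^\epsilon/\Omega$ or $1/\Omega$, which yields exactly the three binomial counts $\binom{d-1}{m-1}2^{m-1}$, $\binom{d-2}{m-2}2^{m-2}$ (twice), and $\binom{d-2}{m-1}2^{m-1}$ used in the paper. Your explicit treatment of the $m=1$ boundary and of why $q$ is unaffected by the sign of $\mathbf{v}$'s nonzero entry (equal counts for $y(l)=\pm1$) are small refinements of details the paper handles implicitly via its ``without loss of generality'' step, not a different argument.
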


The proof of Lemma \ref{lem:pq} is shown in Appendix \ref{sec:pq}. Lemma \ref{lem:pq} is important since our aggregation rules are based on these probabilities.

Recall that $\mathbf{Y}_i$, $i\in S_j$ are the outputs of the randomizer $Q$ specified in \eqref{eq:pq}, with input $\mathbf{v}_i$, $i\in S_j$. Based on Lemma \ref{lem:pq}, the expected value is
\begin{eqnarray}
	\mathbb{E}[\mathbf{Y}_i|\mathbf{v}_i] = p(1-e^{-\epsilon}) \mathbf{v}_i.
	\label{eq:ey}
\end{eqnarray}
Therefore, to achieve an unbiased estimation of $a_{jk}^*$, we let
\begin{eqnarray}
	a_{jk} = \frac{2^{j/2}}{n_jp(1-e^{-\epsilon})} \sum_{i\in S_j} Y_i(k).
	\label{eq:ajk}
\end{eqnarray}
Regarding $a_{jk}$, we have the following lemma.
\begin{lem}\label{lem:biasvar}
	Let $d=2^j$. Then
	
	(1) $a_{jk}$ is unbiased, i.e.
	\begin{eqnarray}
		\mathbb{E}[a_{jk}] = a_{jk}^*;
	\end{eqnarray}
	(2) The variance given the sample allocation $S_0,\ldots, S_J$ can be expressed as
	\begin{eqnarray}
		&&\sum_{k=0}^{2^j - 1}\Var[a_{jk}|S_j]\nonumber\\
		&&= \frac{2^j}{n_j}\left[\frac{1+e^{-\epsilon}}{p(1-e^{-\epsilon})^2} - 1+\frac{q(d-1)}{p^2(1-e^{-\epsilon})^2}\right].
	\end{eqnarray}
	(3) The overall variance is bounded by
	\begin{eqnarray}
		\sum_{k=0}^{2^j - 1}\Var[a_{jk}] \leq  \frac{2^j}{n_j}\left[\frac{1+e^{-\epsilon}}{p(1-e^{-\epsilon})^2} +\frac{q(d-1)}{p^2(1-e^{-\epsilon})^2}\right].
		\label{eq:overall}
	\end{eqnarray}
\end{lem}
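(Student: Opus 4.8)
The plan is to condition on the random partition $S_0,\ldots,S_J$ while treating the samples $X_1,\ldots,X_n$ as fixed, so that within group $S_j$ the perturbed vectors $\{\mathbf{Y}_i\}_{i\in S_j}$ are mutually independent; this reduces parts (1) and (2) to per-sample, per-coordinate computations and isolates the sampling randomness for part (3). For part (1) I would start from the estimator \eqref{eq:ajk} and substitute the conditional first moment \eqref{eq:ey}. The prefactor $2^{j/2}/(n_j p(1-e^{-\epsilon}))$ cancels the $p(1-e^{-\epsilon})$ from \eqref{eq:ey} and, together with the $2^{-j/2}$ scaling in the encoding \eqref{eq:encode}, collapses $\mathbb{E}[a_{jk}\mid S_j]$ to $\frac{1}{n_j}\sum_{i\in S_j}\psi_{jk}(X_i)$. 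Averaging over the uniformly random partition, where $\text{P}(i\in S_j)=n_j/n$ by exchangeability, recovers $a_{jk}^{*}=\frac{1}{n}\sum_i\psi_{jk}(X_i)$ exactly, giving unbiasedness.

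For part (2), by independence within $S_j$ the conditional variance is $\frac{2^j}{n_j^2 p^2(1-e^{-\epsilon})^2}\sum_{i\in S_j}\Var[Y_i(k)\mid\mathbf{v}_i]$. The key inputs are Lemma \ref{lem:pq} and the sign-flip symmetry of the randomizer \eqref{eq:pq}: since $Y_i(k)\in\{-1,0,1\}$, I only need the second moment $\mathbb{E}[Y_i(k)^2\mid\mathbf{v}_i]=\text{P}(Y_i(k)\neq 0\mid\mathbf{v}_i)$ and the first moment from \eqref{eq:ey}. I would split the $2^j$ coordinates into the single coordinate where $\mathbf{v}_i$ is nonzero, whose variance $p(1+e^{-\epsilon})-p^2(1-e^{-\epsilon})^2$ follows from parts (1) and (3) of Lemma \ref{lem:pq} together with the symmetry $\text{P}(Y(k)=-1\mid v(k)=1)=pe^{-\epsilon}$, and the remaining $d-1$ coordinates where $v_i(k)=0$, each contributing the $q$-determined variance. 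Because the per-sample count is identical across $i$, the factor $n_j$ collapses, and dividing each piece by $p^2(1-e^{-\epsilon})^2$ yields the stated closed form, with the $\frac{1+e^{-\epsilon}}{p(1-e^{-\epsilon})^2}-1$ term coming from the nonzero coordinate and the $\frac{q(d-1)}{p^2(1-e^{-\epsilon})^2}$ term from the zero coordinates.

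For part (3), I would release the conditioning via the law of total variance, $\Var[a_{jk}]=\mathbb{E}_{S_j}\!\big[\Var[a_{jk}\mid S_j]\big]+\Var_{S_j}\!\big[\mathbb{E}[a_{jk}\mid S_j]\big]$. The first summand is precisely part (2), which depends on the partition only through the fixed size $n_j$ and is therefore unchanged by the outer expectation. The second summand, summed over $k$, is the variance of the without-replacement sample mean $\frac{1}{n_j}\sum_{i\in S_j}\psi_{jk}(X_i)$; bounding the finite-population correction factor by $1$ gives $\sum_k\Var_{S_j}[\cdots]\le\frac{1}{n_j}\cdot\frac{1}{n}\sum_i\sum_k\psi_{jk}(X_i)^2$. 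The decisive Haar identity is that for each fixed $X_i$ exactly one coefficient is nonzero with magnitude $2^{j/2}$, so $\sum_k\psi_{jk}(X_i)^2=2^j$ and the sampling term is capped by $2^j/n_j$. Adding this $+2^j/n_j$ cancels the $-1$ inside the bracket of part (2), producing exactly the bound \eqref{eq:overall}.

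I expect the main obstacle to be the two-level variance bookkeeping in part (3): one must cleanly separate the perturbation noise from the partition (sampling) noise, recognize that the conditional-variance term is insensitive to the outer expectation, and control the sampling term without overcounting. This is where the identity $\sum_k\psi_{jk}(X_i)^2=2^j$ does the essential work, so that the otherwise awkward $-1$ term is absorbed to give the clean upper bound.
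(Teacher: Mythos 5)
Your proposal is correct and follows essentially the same route as the paper's proof: unbiasedness by cancelling the prefactor against \eqref{eq:ey} and averaging over the partition, the conditional variance by splitting each sample's coordinates into the single nonzero one and the $d-1$ zero ones via Lemma \ref{lem:pq}, and the overall bound by the law of total variance with the sampling term capped at $2^j/n_j$ through the $1$-sparsity of the Haar encoding, which absorbs the $-1$. The only cosmetic difference is that you bound the partition-induced variance via the finite-population correction factor $\leq 1$, while the paper invokes negative correlation of the inclusion indicators; these are the same fact about sampling without replacement.
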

The proof of Lemma \ref{lem:biasvar} is shown in Appendix \ref{sec:biasvar}. To minimize the variance in \eqref{eq:overall}, we can let $m$ be
\begin{eqnarray}
	m^*=\arg\min_m \left[\frac{1+e^{-\epsilon}}{p(1-e^{-\epsilon})^2} +\frac{q(d-1)}{p^2(1-e^{-\epsilon})^2}\right].
	\label{eq:mopt}
\end{eqnarray}

\subsection{Implementation and Parameter Selection}



Now we discuss the implementation of the randomizer \eqref{eq:pq}. A simple way is to randomly generate $\mathbf{y}\in \mathcal{Y}$ with uniform probability at first, and then accept the result with probability $1$ if $\langle \mathbf{y}, \mathbf{v}\rangle=1$, and $e^{-\epsilon}$ if $\langle \mathbf{y}, \mathbf{v}\rangle = 0$. If it is not accepted, then draw $\mathbf{y}\in \mathcal{Y}$ repeatedly, until some vector $\mathbf{y}$ is accepted. The time complexity of each sampling step is $O(d)$. Moreover, one needs to draw $\mathbf{y}\in \mathcal{Y}$ for $O(e^\epsilon)$ times before acceptance. Therefore, the overall time complexity for each sample is $O(de^\epsilon)$. This method is suitable for small $\epsilon$. However, under large $\epsilon$, the sampling becomes inefficient. 

To overcome the drawback of such simple implementation, we now propose a fast implementation method. Let $j$ be the nonzero element of an encoded vector $\mathbf{v}$, i.e. $v(j) \in \{1,-1\}$. Then let
\begin{eqnarray}
	Y(j) = \left\{
	\begin{array}{ccc}
		\begin{array}{ccc}
			v(j) &\text{with probability} & p\\
			0 &\text{with probability} & 1-p(1+e^{-\epsilon})\\
			-v(j) &\text{with probability} & pe^{-\epsilon}
		\end{array}
	\end{array}
	\right.\hspace{-3mm}
	\label{eq:yj}
\end{eqnarray}
After that, if $Y(j)=1$, then we pick $m-1$ elements randomly from remaining $d-1$ elements. If $Y(j) = 0$, then we pick $m$ elements from these $d-1$ elements. The procedure is stated precisely in \autoref{alg:perturb}.

\begin{algorithm}[t]
	\caption{Perturbation}\label{alg:perturb}
	\textbf{Input:} The encoded vector $\mathbf{v}\in \mathcal{V}$\\
	\textbf{Output:} Perturbation output $\mathbf{Y} \in \mathcal{Y}$
	\textbf{Parameter:} $d$, $m$
	\begin{algorithmic}[1]
		\STATE Select $Y(j)$ according to \eqref{eq:yj}
		\IF{$Y(j)=0$}
		\STATE Randomly select $m$ elements $j_1,\ldots, j_m$ from $\{0,\ldots, 2^j-1\}\setminus \{j\}$ without replacement
		\STATE $Y(j_k)=1$ for $k=1,\ldots, m$
		\ELSE
		\STATE Randomly select $m-1$ elements $j_1,\ldots, j_{m-1}$ from $\{0,\ldots, 2^j - 1\}\setminus \{j\}$ without replacement
		\STATE $Y(j_k) = 1$ for $k=1,\ldots, m-1$
		\ENDIF
		\RETURN $\mathbf{Y}=(Y(0), \ldots, Y(2^j - 1)$
	\end{algorithmic}
\end{algorithm}

It can be shown that Algorithm \ref{alg:perturb} satisfies \eqref{eq:pq}, with time complexity $O(d)$. We omit the detailed proof here.

\subsection{Post-Processing}

Post-processing is common for frequency estimation methods \cite{wang2019locally}. By post-processing, we hope to achieve normalized and nonnegative frequency estimation. The above algorithm gives $a_{jk}$, which is an estimate of $a_{jk}^*$. After calculating $a_{jk}$ for all $j=0, \ldots, J$, the raw construction of the final pdf is

\begin{eqnarray}
	f_J(x) = 1+\sum_{j=0}^J \sum_{k=0}^{2^j - 1}a_{jk} \psi_{jk}(x).
	\label{eq:fj}
\end{eqnarray}

From the wavelet function \eqref{eq:haarjk}, $\int_0^1 \psi_{jk}(x) dx = 0$. Therefore, the above construction already ensures that $f_J$ is normalized, i.e. $\int_0^1 f_J(x) = 1$. However, the raw construction does not ensure the non-negativity. It is possible that $f_J(x)<0$ for some $x$. 

Our solution is to reconstruct $f_J(x)$ iteratively. Firstly, we let $f_0(x)=1$ for $x\in [0,1]$. In the $j$-th iteration, given $f_{j-1}(x)$ that was already reconstructed in previous iterations, We then update $a_{jk}$ as follows:
\begin{eqnarray}
	a_{jk}\leftarrow \Clip(a_{jk}, a_{\max}),
\end{eqnarray}
in which $\Clip(u, r) = \max(-r, \min(u, r))$ clips the absolute value, and 
\begin{eqnarray}
	a_{\max} = 2^{-j/2}\min(f_{j-1}(k2^{-j}), f_{j-1}((k+1)2^{-j})).
\end{eqnarray}
Such process is repeated from $j=1,\ldots, J$. It can be proved by induction that the above procedure yields a non-negative pdf, such that $f_J(x) \geq 0$ for all $x\in [0,1]$.

\subsection{Discussion}

\mypara{Complexity} Suppose that $n_j$ samples are allocated into the $j$-th group. The encoding, perturbation and aggregation mechanisms proposed in this section requires $O(n_j 2^j)$ time. These mechanisms are implemented over $j=0,1,\ldots, J$, thus the overall time complexity is $O(n 2^J/J)$. It appears to have an exponential dependence over $J$. However, according to the discussion in \autoref{sec:overall}, we pick $J=\lceil \log_2 n/2\rceil$, which has only a logarithmic dependence on $n$. Therefore, the overall time complexity is $O(n^{3/2}/\ln n)$.

\mypara{Other wavelet functions} In this paper, we use Haar wavelet basis. It is also possible to use other wavelet basis functions. The advantage of Haar wavelet is simpler encoding. Recall \autoref{sec:encoding}, the encoded variable $\mathbf{v}_i$ can take only $2^{j+1}$ values. Such a small output range leads to easier design of perturbation protocols. If we use other wavelet functions, then from \eqref{eq:encode}, each elements in $\mathbf{v}_i$ takes continuous numerical values, then we may use some high dimensional mean estimators to estimate $a_{jk}^*$ under $\epsilon$-LDP \cite{wang2019collecting,duchi2018minimax}, which is less efficient than the LDP mechanisms with Haar wavelet basis.

\section{Theoretical Analysis}\label{sec:theory}
This section provides an analysis of the proposed method. The goal of theoretical analysis is to provide a rigorous validation of the advantage of our new method compared with existing methods. Moreover, the theoretical analysis will also provide guidelines about the selection of hyperparameter $J$ and the allocation of samples in $S_0,\ldots, S_J$.

This work focuses on the Wasserstein distance $W(F_J, F)$, in which $F_J$ is the cdf of our reconstructed distribution using $a_{jk}$, $j=0,1,\ldots, J$, $k=0,1,\ldots, 2^j-1$. To be more precise,
\begin{eqnarray}
	F_J(x)=\int_0^x f_J(u) du,
	\label{eq:FJ}
\end{eqnarray}
in which $f_J$ is defined in \eqref{eq:fj}. $F$ is the real cdf defined in \eqref{eq:cdfreal}. 

From the triangle inequality of Wasserstein distance,

\begin{eqnarray}
	W(F_J, F)\leq W(F_J^*, F)+W(F_J, F_J^*).
	\label{eq:Wdec}
\end{eqnarray}
in which $F_J^*$ is the cdf of the optimal $J$-th order wavelet expansion \eqref{eq:FJstar}. $W(F_J^*, F)$ can be viewed as the approximation error, while $W(F_J, F_J^*)$ can be viewed as the estimation error.

\subsection{The Approximation Error}
\begin{lem}\label{lem:wdist}
	The Wasserstein distance between reconstructed cdf $F_J^*$ and the ground truth $F$ is bounded by
	\begin{eqnarray}
		W(F_J^*, F)\leq 2^{-(J+1)}.
	\end{eqnarray}
\end{lem}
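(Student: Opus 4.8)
The plan is to work directly from the integral representation of the one-dimensional Wasserstein distance, so that $W(F_J^*, F) = \int_0^1 |F_J^*(x) - F(x)|\,dx$ by \eqref{eq:Wdf}. The structural fact that drives everything is that the Haar expansion truncated at level $J$ produces a pdf $f_J^*$ that is piecewise constant on the dyadic cells of width $2^{-(J+1)}$: the finest-level functions $\psi_{Jk}$ only jump at odd multiples of $2^{-(J+1)}$, and all coarser levels jump at coarser dyadic points, so $f_J^*$ is constant on each interval $I_k := [k2^{-(J+1)}, (k+1)2^{-(J+1)}]$. Consequently $F_J^*(x) = \int_0^x f_J^*$ is \emph{piecewise linear}, affine on each $I_k$. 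Writing $a_k := k2^{-(J+1)}$ and $c_k := F(a_k)$, Lemma~\ref{lem:approx} tells me that $F_J^*(a_k) = c_k$ for every $k = 0,\ldots,2^{J+1}$, so on each cell $F_J^*$ is the straight line joining $c_k$ to $c_{k+1}$.

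The next step is the per-cell estimate. Since $F$ is a cdf we have $c_{k+1} \ge c_k$, so on $I_k$ the linear function $F_J^*$ is non-decreasing and therefore trapped in $[c_k, c_{k+1}]$; likewise $F$, being a non-decreasing cdf with $F(a_k)=c_k$ and $F(a_{k+1})=c_{k+1}$, stays in $[c_k,c_{k+1}]$ on $I_k$. Hence the pointwise gap satisfies $|F_J^*(x)-F(x)| \le c_{k+1}-c_k$ for every $x \in I_k$, and integrating over the cell of length $2^{-(J+1)}$ gives
\begin{eqnarray}
	\int_{I_k} |F_J^*(x)-F(x)|\,dx \le (c_{k+1}-c_k)\,2^{-(J+1)}.
\end{eqnarray}

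Finally I would sum over the $2^{J+1}$ cells. The factor $2^{-(J+1)}$ pulls out and the differences telescope, so
\begin{eqnarray}
	W(F_J^*, F) \le 2^{-(J+1)} \sum_{k=0}^{2^{J+1}-1} (c_{k+1}-c_k) = 2^{-(J+1)}\bigl(F(1)-F(0)\bigr) \le 2^{-(J+1)},
\end{eqnarray}
using $F(1)=1$ and $F(0)\ge 0$. The step I expect to require the most care is justifying the confinement $F_J^*(x)\in[c_k,c_{k+1}]$: this hinges on $F_J^*$ being \emph{monotone} on each cell, which in turn needs the constant value of $f_J^*$ on $I_k$ to be nonnegative. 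That nonnegativity is not assumed but is forced — the cell average of $f_J^*$ equals $(c_{k+1}-c_k)/2^{-(J+1)}\ge 0$ because $F$ is a cdf, and since $f_J^*$ is constant on $I_k$ its value equals this average. I would make sure to state this explicitly (the endpoint jumps of the step function $F$ are measure-zero and do not affect the integral), after which the piecewise-linear-versus-step comparison and the telescoping are routine.
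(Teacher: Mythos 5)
Your proposal is correct and follows essentially the same route as the paper's proof: both use Lemma~\ref{lem:approx} to pin $F_J^*$ to $F$ at the dyadic points $k2^{-(J+1)}$, trap both functions in $[F(k2^{-(J+1)}), F((k+1)2^{-(J+1)})]$ on each cell, and telescope the resulting per-cell bounds. Your extra care about the piecewise-linearity of $F_J^*$ and the nonnegativity of the cell-constant value of $f_J^*$ simply makes explicit the confinement step that the paper states without justification.
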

\begin{proof}
	From Lemma \autoref{lem:approx}, $F_J^*(k2^{-(J+1)}) = F(k2^{-(J+1)})$. Hence, for any $x\in [k2^{-(J+1)}, (k+1)2^{-(J+1)}]$, $F(k2^{-(J+1)})\leq F_J^*(x)\leq F((k+1))\leq F((k+1)2^{-(J+1)})$. Therefore,
	\begin{eqnarray}
		&&\int |F_J^*(x)-F(x)|dx\nonumber\\ &=& \sum_{k=0}^{2^{J+1} - 1} \int_{k2^{-(J+1)}}^{(k+1)2^{-(J+1)}} |F_J^*(x)-F(x)|dx\nonumber\\
		&\leq & \sum_{k=0}^{2^{J+1}} 2^{-(J+1)}\left[ F((k+1)2^{-(J+1)}) - F(k2^{-(J+1)})\right] \nonumber\\
		&=& 2^{-(J+1)}.
	\end{eqnarray}
\end{proof}

Lemma \ref{lem:wdist} suggests that with the increase of $J$, $F_J^*$ is a closer approximate of $F$. If the distribution satisfies some smoothness properties, the bound may be further improved. However, in the current work, we do not make any additional assumption on the distribution of samples.

\subsection{The Estimation Error}
Now we bound
\begin{eqnarray}
	W(F_J, F_J^*) = \int_0^1 |F_J(x)-F_J^*(x)|dx.
\end{eqnarray}
The analysis of $W(F_J, F_J^*)$ will help us to make a better allocation of samples in $S_j$, $j=1,\ldots, J$, which aims at minimizing $W(F_J, F_J^*)$. To begin with, define
\begin{eqnarray}
	V_j := \min_m 2^j \left[\frac{1+e^{-\epsilon}}{p(1-e^{-\epsilon})^2} + \frac{q(d-1)}{p^2(1-e^{-\epsilon})^2}\right].
	\label{eq:Vj}
\end{eqnarray}
From \eqref{eq:overall}, it can be found that when $m$ is tuned optimally,
\begin{eqnarray}
	\sum_{k=0}^{2^j - 1} \Var[a_{jk}] \leq  \frac{V_j}{n_j}.
\end{eqnarray}
Then $W(F_J, F_J^*)$ can be bounded with $V_j$, $j=0,\ldots, J$.
\begin{lem}\label{lem:Wbound}
	The estimation error can be bounded by
	
	\begin{eqnarray}
		\mathbb{E}[W(F_J, F_J^*)] &\leq & \sqrt{\sum_{j=0}^J 2^{-(2j+2)}\frac{V_j}{n_j}}
	\end{eqnarray}
\end{lem}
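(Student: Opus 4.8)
The plan is to pass from the Wasserstein ($L^1$) distance to an $L^2$ quantity, use the fact that $F_J-F_J^*$ is an explicit linear combination of the estimation errors $a_{jk}-a_{jk}^*$, and then read off the coefficient variances from \eqref{eq:overall}. First I would introduce the integrated wavelets $\Psi_{jk}(x):=\int_0^x \psi_{jk}(u)\,du$. Since $W(F_J,F_J^*)=\int_0^1|F_J(x)-F_J^*(x)|\,dx$ by \eqref{eq:Wdf}, and since $F_J(x)-F_J^*(x)=\int_0^x (f_J-f_J^*)(u)\,du=\sum_{j=0}^J\sum_{k=0}^{2^j-1}(a_{jk}-a_{jk}^*)\Psi_{jk}(x)$, the whole problem reduces to controlling these errors. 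Each $\Psi_{jk}$ is the triangular bump supported on $[k2^{-j},(k+1)2^{-j}]$, rising with slope $2^{j/2}$ and peaking at $\frac{1}{2}2^{-j/2}$. Two elementary facts will be used: (i) for fixed $j$ the supports of $\Psi_{jk}$, $k=0,\ldots,2^j-1$, are pairwise disjoint; and (ii) $\norm{\Psi_{jk}}_\infty=\frac12 2^{-j/2}$ while the support of $\Psi_{jk}$ has Lebesgue measure $2^{-j}$, so that $\int_0^1\Psi_{jk}(x)^2\,dx\leq \norm{\Psi_{jk}}_\infty^2\cdot 2^{-j}=2^{-(2j+2)}$.

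Next I would convert the $L^1$ bound into a variance bound by two applications of Jensen's inequality (equivalently Cauchy--Schwarz on the probability space and on $[0,1]$, both of unit mass):
\begin{eqnarray}
	\mathbb{E}[W(F_J,F_J^*)] &=& \int_0^1 \mathbb{E}|F_J(x)-F_J^*(x)|\,dx \nonumber\\
	&\leq& \int_0^1 \sqrt{\mathbb{E}(F_J(x)-F_J^*(x))^2}\,dx \nonumber\\
	&\leq& \sqrt{\int_0^1 \mathbb{E}(F_J(x)-F_J^*(x))^2\,dx}.
\end{eqnarray}
It then remains to evaluate $\int_0^1 \mathbb{E}(F_J-F_J^*)^2\,dx$. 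Expanding the square produces the double sum $\sum_{j,k}\sum_{l,m}\mathbb{E}[(a_{jk}-a_{jk}^*)(a_{lm}-a_{lm}^*)]\int_0^1\Psi_{jk}\Psi_{lm}\,dx$. All off-diagonal contributions should drop: for $j=l$, $k\neq m$, the integrand $\Psi_{jk}\Psi_{jm}$ vanishes identically by the disjoint-support property (i); for $j\neq l$, the two coefficients are computed from the disjoint user groups $S_j$ and $S_l$ created by the user splitting, so their estimation errors are uncorrelated. Hence only the diagonal survives, and using the unbiasedness $\mathbb{E}[a_{jk}]=a_{jk}^*$ from Lemma \ref{lem:biasvar}, each diagonal term equals $\Var[a_{jk}]$, giving $\int_0^1\mathbb{E}(F_J-F_J^*)^2\,dx=\sum_{j=0}^J\sum_{k=0}^{2^j-1}\Var[a_{jk}]\int_0^1\Psi_{jk}(x)^2\,dx$.

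Finally I would insert the two ingredients: the geometric bound $\int_0^1\Psi_{jk}^2\leq 2^{-(2j+2)}$ from (ii), and the optimally-tuned variance bound $\sum_{k=0}^{2^j-1}\Var[a_{jk}]\leq V_j/n_j$ coming from \eqref{eq:overall} and \eqref{eq:Vj}. Together these yield $\int_0^1\mathbb{E}(F_J-F_J^*)^2\,dx\leq \sum_{j=0}^J 2^{-(2j+2)}V_j/n_j$, and taking the square root produces exactly the claimed bound.

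I expect the delicate step to be the vanishing of the off-diagonal terms. Within a level it is purely geometric (disjoint supports of the $\Psi_{jk}$), but across levels it relies essentially on the independence induced by partitioning the users into disjoint groups $S_0,\ldots,S_J$; one must also be careful that each $a_{jk}$ is compared against $a_{jk}^*$, which is built from all $n$ samples, so the argument is cleanest once one conditions on the sample allocation and then averages. Everything else is a routine convexity-plus-bookkeeping computation.
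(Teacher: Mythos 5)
Your proposal is correct and follows essentially the same route as the paper's proof: both write $F_J-F_J^*$ as a combination of the coefficient errors weighted by the integrated wavelets $\int_0^x\psi_{jk}$, exploit the disjoint supports within each level so that only diagonal terms survive, apply Jensen/Cauchy--Schwarz twice to pass from $L^1$ to the integrated variance, and insert the bounds $\int_0^1\left(\int_0^x\psi_{jk}(u)\,du\right)^2dx\le 2^{-(2j+2)}$ and $\sum_k\Var[a_{jk}]\le V_j/n_j$. The only cosmetic difference is that the paper works pointwise (at each $x$ a single $k=\lfloor 2^jx\rfloor$ per level contributes, giving $\Var[F_J(x)]\le\sum_j 2^{-(j+2)}\Var[a_{j,k_0(x)}]$) and then integrates over $x$, whereas you expand the full double sum first; the cross-level uncorrelatedness you flag as delicate is exactly the step the paper also takes, implicitly and without comment.
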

\begin{proof}
	From \eqref{eq:FJ} and \eqref{eq:FJstar},
	\begin{eqnarray}
		F_J(x)-F_J^*(x) = \sum_{j=0}^J \sum_{k=0}^{2^j - 1}(a_{jk} - a_{jk}^*)\int_0^x \psi_{jk}(u) du.
	\end{eqnarray}
	Regarding $\int_0^x \psi_{jk}(u) du$, according to \eqref{eq:haarjk}, there are two facts. Firstly, for each $j$, $\int_0^x \psi_{jk}(u) du \neq 0$ only if $k=\lfloor 2^j x\rfloor$. Secondly, the bound
	$\left|\int_0^x \psi_{jk}(u) du\right|\leq 2^{-\frac{j}{2} - 1}$ holds.
	Hence, define $k_0(x)=\lceil 2^j x\rceil$, then
	\begin{eqnarray}
		F_J(x)-F_J^*(x)=\sum_{j=0}^J (a_{j,k_0(x)}-a_{j,k_0(x)}^*)\int_0^x \psi_{jk}(u) du.
	\end{eqnarray}
	Therefore
	\begin{eqnarray}
		\Var[F_J(x)] \leq \sum_{j=0}^J 2^{-(j+2)}\Var[a_{j,k_0(x)}].
	\end{eqnarray}
	Our construction ensures that $a_{jk}$ is unbiased. Hence
	\begin{eqnarray}
		\mathbb{E}[W(F_J, F_J^*)] &=& \int_0^1 \mathbb{E}[|F_J(x)-F_J^*(x)|]dx\nonumber\\
		&\leq & \int_0^1 \sqrt{\Var[F_J(x)]} dx\nonumber\\
		&\leq & \sqrt{\int_0^1 \Var[F_J(x)]dx}.
		\label{eq:wbound1}
	\end{eqnarray}
	Note that
	\begin{eqnarray}
		&&\int_0^1 \Var[F_J(x)]dx\nonumber\\
		&=& \int_0^1 \sum_{j=0}^J 2^{-(j+2)} \Var[a_{jk_0(x)}] dx\nonumber\\
		&=&\sum_{j=0}^J 2^{-(j+2)}\sum_{k=0}^{2^j - 1}\int_{k2^{-j}}^{(k+1)2^{-j}} \Var[a_{jk}]dx\nonumber\\
		&=&\sum_{j=0}^J 2^{-(2j+2)} \sum_{k=0}^{2^j - 1}\Var[a_{jk}]\nonumber\\
		&\leq & \sum_{j=0}^J 2^{-(2j+2)}\frac{V_j}{n_j}.
		\label{eq:var}
	\end{eqnarray}
	Lemma \ref{lem:Wbound} can then be proved by combining \eqref{eq:wbound1} and \eqref{eq:var}.
\end{proof}
It remains to bound $V_j$. From \eqref{eq:Vj}, using Lemma \ref{lem:normalize} and Lemma \ref{lem:pq}, we can get the following results.
\begin{lem}\label{lem:vbound}
	(1) If $\epsilon<1$, then $V_j=O(2^{2j}/\epsilon^2)$;
	
	(2) If $\epsilon> j$, then $V_j=O(2^j)$;
	
	(3) If $1\leq \epsilon\leq j$, then $V_j=O(2^j + 2^{2j}/e^\epsilon)$.
\end{lem}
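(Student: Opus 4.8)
The plan is to exploit that $V_j$ in \eqref{eq:Vj} is a minimum over $m$, so to obtain an upper bound it suffices to exhibit, for each regime, one convenient choice $m=m(d,\epsilon)$ and bound the bracketed expression for that choice. The estimates of $p$, $q$ and $\Omega$ for the chosen $m$ come from Lemmas \ref{lem:normalize} and \ref{lem:pq}, combined with the elementary ratio identities $\binom{d-1}{m}/\binom{d-1}{m-1}=(d-m)/m$ and $\binom{d-2}{m-2}/\binom{d-1}{m-1}=(m-1)/(d-1)$. Before plugging anything in, I would first separate out the $\epsilon$-dependence of the prefactors: for $\epsilon<1$ one has $1-e^{-\epsilon}=\Theta(\epsilon)$ and $1+e^{-\epsilon}=\Theta(1)$, whereas for $\epsilon\ge 1$ both $1\pm e^{-\epsilon}=\Theta(1)$. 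Hence the bracket equals $\Theta(\epsilon^{-2})\big(1/p+q(d-1)/p^2\big)$ when $\epsilon<1$ and $\Theta(1)\big(1/p+q(d-1)/p^2\big)$ when $\epsilon\ge 1$, and the whole problem reduces to estimating $1/p$ and $qd/p^2$ for a well-chosen $m$.

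For regime (1) I would take $m=\lceil d/2\rceil$ (with $m=1$ in the degenerate case $j=0$, where the $q$-term vanishes since $d-1=0$). Writing $A=\binom{d-1}{m-1}2^{m-1}$ and $B=\binom{d-1}{m}2^m$, the ratio identity gives $B/A=2(d-m)/m=\Theta(1)$, so $\Omega=A(e^\epsilon+1)+B=\Theta(A)$ and therefore $p=Ae^\epsilon/\Omega=\Theta(1)$; the same bookkeeping applied to the three summands of \eqref{eq:q2} yields $q=\Theta(1)$ as well. Consequently $1/p=\Theta(1)$ and $qd/p^2=\Theta(d)$, so the bracket is $\Theta(2^j/\epsilon^2)$ and $V_j=2^j\cdot\Theta(2^j/\epsilon^2)=O(2^{2j}/\epsilon^2)$, which is claim (1).

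For regimes (2) and (3), where $\epsilon\ge 1$, I would make the unified choice $m=\max\{1,\lceil d/e^\epsilon\rceil\}$. When $e^\epsilon\ge d$ this forces $m=1$, so Lemma \ref{lem:pq} gives $q=1/\Omega$ with $\Omega=e^\epsilon+2d-1=\Theta(e^\epsilon)$, hence $p=\Theta(1)$ and $q=\Theta(e^{-\epsilon})$. When $e^\epsilon<d$ the choice makes $m\ge 2$ with $(d-m)/m=\Theta(e^\epsilon)$, which again drives $p=\Theta(1)$, while tracking the three terms of \eqref{eq:q2} gives $q=\Theta(e^{-\epsilon})$. In either sub-case the bracket equals $\Theta(1)\big(1+de^{-\epsilon}\big)$, so $V_j=O(2^j+2^{2j}/e^\epsilon)$, which is claim (3). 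Specializing to regime (2), the hypothesis $\epsilon>j$ gives $2^j e^{-\epsilon}<(2/e)^j<1$, so the second term is dominated and the bound collapses to $V_j=O(2^j)$, which is claim (2).

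The main obstacle will be the uniform control of the binomial ratios under the rounded choice $m\approx d/e^\epsilon$: I expect the delicate step to be showing $q=\Theta(e^{-\epsilon})$ across the whole intermediate range, since this requires verifying that none of the three terms in \eqref{eq:q2} dominates the others by more than a constant factor, and that the hidden constants stay independent of $j$ and $\epsilon$ as $m$ crosses the threshold $e^\epsilon\approx d$ between the two sub-cases (including the effect of the ceiling). Everything else is routine algebra once $p$ and $q$ are pinned down to the correct order.
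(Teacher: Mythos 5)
Your proposal is correct and takes essentially the same route as the paper's proof: upper-bound the minimum in \eqref{eq:Vj} by exhibiting a concrete $m$ in each regime, then control $p$, $q$, $\Omega$ via Lemma \ref{lem:normalize} and Lemma \ref{lem:pq} (the paper likewise picks $m=1$ when $\epsilon>j$ and $m=[de^{-\epsilon}]$ in the intermediate regime). The one divergence is regime (1): the paper takes $m=d$ rather than your $m=\lceil d/2\rceil$, which makes $\binom{d-1}{m}=0$ and yields the exact closed forms $p=e^\epsilon/(e^\epsilon+1)$ and $q=1/2$ from \eqref{eq:norm} and \eqref{eq:q2} with no asymptotic bookkeeping of binomial ratios; your choice works too but needs the $\Theta(1)$ ratio estimates you describe. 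Conversely, for regime (3) your sketch (ratio identities giving $p=\Theta(1)$, $q=\Theta(e^{-\epsilon})$ under $m\approx d/e^\epsilon$) actually supplies the computation that the paper omits, as its proof of case (3) merely states the choice of $m$ and asserts the bound.
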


The proof of Lemma \ref{lem:vbound} is shown in Appendix \ref{sec:vbound}. Recall that for frequency estimation problem, the variance is $O(d/(n\epsilon^2))$ for $\epsilon\leq 1$, and $O(d/(ne^\epsilon))$ for $\epsilon > 1$ \cite{ye2018optimal,zhao2025attack}. Compared with the bound for frequency estimation (with $d=2^j$), the bound of $V_j$ is different in several aspects. Firstly, as $\psi_{jk}(X_i)$ take values in $\{2^{j/2}, 0, -2^{j/2}\}$ instead of $\{1,0,-1\}$, the variance has an additional $2^j$ factor. Secondly, samples in $S_j$ are randomly selected from the original dataset. $V_j$ includes the variance caused by the randomness of sample allocation.

\subsection{Sample Allocation}\label{sec:allocation}
Now we discuss the sample allocation in $S_j$, $j=0,\ldots, J$. According to Lemma \ref{lem:Wbound}, we solve the following optimization problem:

\begin{equation*}
	\begin{aligned}
		& \underset{n_0,\ldots, n_J}{\text{minimize}}
		& & \sum_{j=0}^J 2^{-(2j+2)} \frac{V_j}{n_j} \\
		& \text{subject to}
		& & \sum_{j=0}^J n_j = n.
	\end{aligned}
\end{equation*}
From the above optimization problem, we assign samples according to the following rule:
\begin{eqnarray}
	n_j = \left\lfloor\frac{2^{-j} \sqrt{V_j}}{\sum_{l=0}^J 2^{-l} \sqrt{V_l}} n\right\rfloor.
	\label{eq:nj}
\end{eqnarray}
Due to the flooring operation, the sum of $n_j$ may be slightly less than $n$. The remaining samples can be allocated arbitrarily.

\subsection{The Overall Bound}\label{sec:overall}
Now we put the above analysis together, and get the following theorem.

\begin{thm}\label{thm:overall}
	With the allocation rule in \autoref{sec:allocation}, the Wasserstein distance is bounded by
	\begin{eqnarray}
		&&\mathbb{E}[W(F_J, F)]\nonumber\\
		&&=\left\{
		\begin{array}{ccc}
			O\left(\frac{J}{\epsilon\sqrt{n}}\right) + 2^{-(J+1)} &\text{if} & \epsilon<1\\
			O\left(\frac{1}{\sqrt{n}}+\frac{J}{\sqrt{n}e^\epsilon}\right)+2^{-(J+1)} &\text{if} & 1\leq \epsilon\leq \ln J\\
			O\left(\frac{1}{\sqrt{n}}\right) + 2^{-(J+1)} &\text{if} & \epsilon>\ln J.
		\end{array}
		\right.\nonumber\\
		\label{eq:total}
	\end{eqnarray}
\end{thm}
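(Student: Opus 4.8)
The plan is to prove \eqref{eq:total} by feeding the two pieces produced by the triangle inequality \eqref{eq:Wdec} into the lemmas already in hand. Taking expectations in \eqref{eq:Wdec} gives $\mathbb{E}[W(F_J,F)]\le W(F_J^*,F)+\mathbb{E}[W(F_J,F_J^*)]$. The approximation term is disposed of immediately by Lemma \ref{lem:wdist}, which supplies the deterministic bound $W(F_J^*,F)\le 2^{-(J+1)}$; this is exactly the additive $2^{-(J+1)}$ appearing in all three cases of \eqref{eq:total}. Everything else reduces to controlling the estimation term $\mathbb{E}[W(F_J,F_J^*)]$ regime by regime.

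For the estimation term I would first substitute the allocation \eqref{eq:nj} into the bound of Lemma \ref{lem:Wbound}. The allocation \eqref{eq:nj} is precisely the minimizer of $\sum_{j=0}^J 2^{-(2j+2)}V_j/n_j$ subject to $\sum_j n_j=n$ (a one-line Lagrange / Cauchy--Schwarz computation), so reinserting it collapses the sum into a single square and yields
\begin{eqnarray}
	\mathbb{E}[W(F_J,F_J^*)]\le \frac{1}{2\sqrt{n}}\sum_{j=0}^J 2^{-j}\sqrt{V_j}.
\end{eqnarray}
I would remark that the floor in \eqref{eq:nj}, and the resulting slight deficit in $\sum_j n_j$, perturbs each $n_j$ by at most a constant factor and therefore only affects constants. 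With this clean scalar form in hand, the theorem becomes a matter of estimating $\sum_{j=0}^J 2^{-j}\sqrt{V_j}$ using the three parts of Lemma \ref{lem:vbound}.

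The heart of the argument is then a case analysis on $\epsilon$. When $\epsilon<1$ every level $j$ falls under part (1), so $2^{-j}\sqrt{V_j}=O(1/\epsilon)$ is flat in $j$ and summing over the $J+1$ levels gives $O(J/\epsilon)$, i.e.\ the estimation error $O(J/(\epsilon\sqrt{n}))$ of the first case. When $\epsilon\ge 1$ the sum must be split at $j\approx\epsilon$: for the low levels $j\le\epsilon$ part (2) gives $2^{-j}\sqrt{V_j}=O(2^{-j/2})$, a convergent geometric series contributing $O(1)$ and hence the $O(1/\sqrt{n})$ term; for the high levels $j>\epsilon$ part (3) splits $\sqrt{V_j}$ into a geometric piece (again summing to $O(1)$) plus a piece that is essentially constant in $j$ but exponentially suppressed in $\epsilon$, so that the $O(J)$ high levels together contribute the second, $\epsilon$-suppressed term of the middle case in \eqref{eq:total}. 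Finally, once $\epsilon>\ln J$ this suppression factor falls below $1/J$, the $O(J)$ high levels contribute only $O(1)$, and the estimation error collapses to $O(1/\sqrt{n})$, giving the third case.

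The main obstacle is this last case analysis. One has to track the level-dependent transition between the three regimes of Lemma \ref{lem:vbound} as $j$ grows, correctly identify the dominant term of $\sqrt{V_j}$ after the square root (it is here that the geometric pieces decouple from the $O(J)$-many near-constant pieces), and verify that the thresholds $\epsilon=1$ and $\epsilon=\ln J$ are exactly where the dominant contribution switches between the geometric and the $J$-accumulating parts. Everything around it---the triangle inequality, the optimality of \eqref{eq:nj}, and the approximation bound---is routine given the earlier lemmas, so the delicate bookkeeping lives entirely in the transition regime $1\le\epsilon\le\ln J$.
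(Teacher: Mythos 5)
Your overall route is exactly the paper's own: the paper states that Theorem~\ref{thm:overall} follows by combining Lemma~\ref{lem:wdist}, Lemma~\ref{lem:Wbound}, Lemma~\ref{lem:vbound} and the allocation rule \eqref{eq:nj}, and omits the details. Your reduction of the estimation error to $\frac{1}{2\sqrt{n}}\sum_{j=0}^J 2^{-j}\sqrt{V_j}$ via the optimality of \eqref{eq:nj} in Lemma~\ref{lem:Wbound} is correct, and so is the case $\epsilon<1$, where $2^{-j}\sqrt{V_j}=O(1/\epsilon)$ summed over $J+1$ levels gives $O(J/(\epsilon\sqrt{n}))$.

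The gap sits in the middle regime, precisely the bookkeeping you yourself call the heart of the argument. For levels $j\geq\epsilon$, Lemma~\ref{lem:vbound}(3) gives $V_j=O(2^j+2^{2j}/e^{\epsilon})$, hence $2^{-j}\sqrt{V_j}=O(2^{-j/2}+e^{-\epsilon/2})$: the square root halves the exponent, so the per-level ``exponentially suppressed'' piece is $e^{-\epsilon/2}$, not $e^{-\epsilon}$. Summing over the $O(J)$ high levels therefore yields an estimation error of $O\bigl((1+Je^{-\epsilon/2})/\sqrt{n}\bigr)$, i.e.\ a middle case of $O\bigl(1/\sqrt{n}+J/(\sqrt{n}\,e^{\epsilon/2})\bigr)$, and the collapse to $O(1/\sqrt{n})$ requires $Je^{-\epsilon/2}=O(1)$, i.e.\ $\epsilon\geq 2\ln J$. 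Your claim that ``the suppression factor falls below $1/J$ once $\epsilon>\ln J$'' is false: at $\epsilon=\ln J$ the factor $e^{-\epsilon/2}$ equals $1/\sqrt{J}$. Nor can this be repaired by a cleverer choice of $m$ in \eqref{eq:Vj}: for the randomizer \eqref{eq:pq} one has $p=\Theta(1)$ and $q/p=\Theta(m/d+e^{-\epsilon})$ at the optimum, so $V_j=\Theta(2^j+2^{2j}e^{-\epsilon})$ and the $e^{-\epsilon/2}$ term is unavoidable along this route. To be fair, this defect is inherited from the paper itself---the exponents stated in \eqref{eq:total} do not follow from the paper's own lemmas by this argument either, and the honest conclusion is \eqref{eq:total} with $e^{\epsilon/2}$ in place of $e^{\epsilon}$ and $2\ln J$ in place of $\ln J$ (which leaves the subsequent choice $J=\lceil(\log_2 n)/2\rceil$ and the small-$\epsilon$/large-$\epsilon$ conclusions intact). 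But as written, your proof asserts the stated exponents exactly at the step where the derivation fails to produce them, so it does not establish the theorem in the form given.
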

Theorem \autoref{thm:overall} can be proved by combination of Lemma \ref{lem:wdist}, \ref{lem:Wbound}, \ref{lem:vbound} and the allocation rule \eqref{eq:nj}. We omit detailed steps here. From Theorem \ref{thm:overall}, we can let $J=\lceil(\log_2 n)/2\rceil$, then the bound of total Wasserstein distance becomes $O(\ln n/(\epsilon\sqrt{n}))$ for small $\epsilon$, and $O(1/\sqrt{n})$ for large $\epsilon$.

If we put all samples into $S_J$, and let $S_j$ remains empty for $j=0,1,\ldots, J-1$, then our algorithm and theoretical results reduce to categorical frequency estimation methods. Therefore, compared with methods for categorical data, our wavelet expansion assign most samples to estimate coefficients of lower order. This prevents the probability mass from being misplaced too far away. Moreover, our analysis does not rely on any smoothness assumption of the data distribution. Compared with existing methods that randomizes samples in numerical domain, our method does not blur the sample distribution. Therefore, our new proposed wavelet expansion approach is more suitable to non-smooth distributions.
\section{Evaluation}\label{sec:numerical}
In this section, we show some numerical experiments to verify our proposed approach.

\subsection{Experimental Setup}
\mypara{Datasets} In this experiment, we use the following datasets.

\begin{itemize}
	\item Synthesized $Beta(5,2)$ dataset. The pdf of $Beta(\alpha, \beta)$ distribution is defined as
	\begin{eqnarray}
		f(x) = \frac{1}{\mathbb{B}(\alpha, \beta)}x^{\alpha - 1}(1-x)^{\beta - 1}, x\in [0,1]
	\end{eqnarray}
	in which $\mathbb{B}(\alpha, \beta)=\int_0^1 t^{\alpha-1}(1-t)^{\beta - 1}dt$ is the Beta function. In this experiment, we generate $N=100,000$ samples, which follows \cite{li2020estimating}. The $Beta(5,2)$ distribution can be viewed as an example of smooth distributions.
	
	\item Retirement. The San Francisco employee retirement plan data \cite{retirement} records the salary and benefits paid to city employees. There are $682,410$ samples in total, and we map all samples into $[0,1]$. For the Retirement dataset, the distribution is non-smooth.
	
	\item Income. This dataset comes from the American Community Survey \cite{ruggles2015integrated}. The values range from $0$ to $1,563,000$. We extract samples that are smaller than $2^{19}$ and map them into $[0,1]$. Income dataset is highly spiky, since people tend to report their income with precision up to hundreds or thousands. 
	
	\item Taxi pickup. This dataset comes from 2018 January New York Taxi data \cite{taxi}. This dataset contains the pickup time in a day. The samples are mapped into $[0,1]$. The total sample size is $8,760,687$. Since pickup events happen randomly within a day, the sample distribution is relatively smooth.
\end{itemize}

\mypara{Competitors} 
For all of these datasets, we compare the experimental results of our methods with the following baselines.
\begin{itemize}
	\item Categorical frequency oracle with binning. We divide the support $[0,1]$ into $d$ bins, such that the length of each bin is $h=1/d$. All samples are assigned to these bins, and then we just use OUE or kRR to estimate the distribution. According to the analysis in \cite{wang2017locally}, for a better performance, if $d<3e^\epsilon+2$, then we use kRR, otherwise we use OUE. In our experiments, we try $d=8$,$16$,$32$ and $64$, respectively. In the remainder of this section, we use Binning-$d$ to denote categorical frequency binning with $d$ bins. The final result is calibrated using the Norm-Sub technique \cite{wang2019locally}.
	
	\item Hierarchy histogram (HH). We follow the algorithm proposed in \cite{kulkarni2019answering}, which conducts a dyadic decomposition of the domain, which constructs a full $B$-ary tree with height $h$. Each layer of the tree is estimated separately.
	
	\item Convolution approach. We follow \cite{fang2023locally}, which converts the distribution estimation problem into a deconvolution algorithm. \cite{fang2023locally} proposed Direct Wiener (DW) and Improved Iterative Wiener (IIW) algorithms. In our experiments, we run both algorithms. The performances are very close between DW and IIW. In the figures, we show the better one among DW and IIW.
	
	\item SW with EM. We follow the square wave mechanism in \cite{li2020estimating} and run its Algorithm1 for post-processing.
\end{itemize}

\mypara{Metrics} We evaluate the results using both Wasserstein and KS distances, which are defined in \eqref{eq:Wdf} and \eqref{eq:KSdf}, respectively. To calculate the values of Wasserstein and KS distances, we construct $M=256$ grids. For each grid, we calculate the real empirical cdf from the raw data, as well as the estimated value. Then the Wasserstein distance is calculated by averaging the absolute difference between empirical cdf and its estimation, and the KS distance is the maximum absolute distance.

\mypara{Implementation} For each dataset, each method and each predetermined privacy budget, we repeat the experiment for $100$ times and take the average values for comparison.

\subsection{Overall Performance}

\begin{figure*}[h!]
	\includegraphics[width=\linewidth]{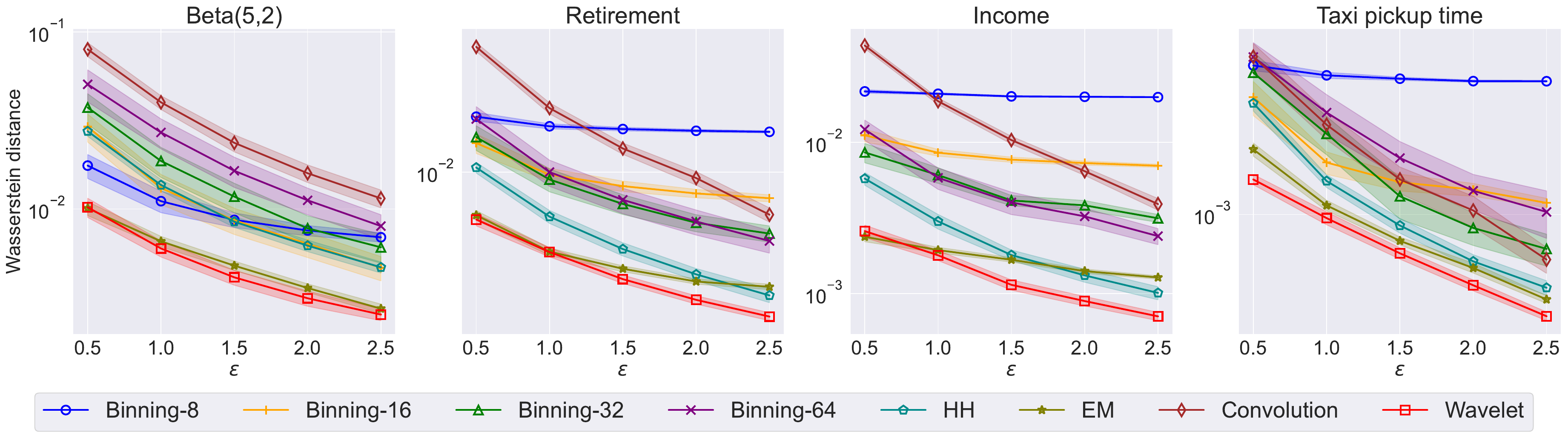}
	\caption{Wasserstein distances for different datasets.}\label{fig:wdist}
\end{figure*}
\begin{figure*}[h!]
	\includegraphics[width=\linewidth]{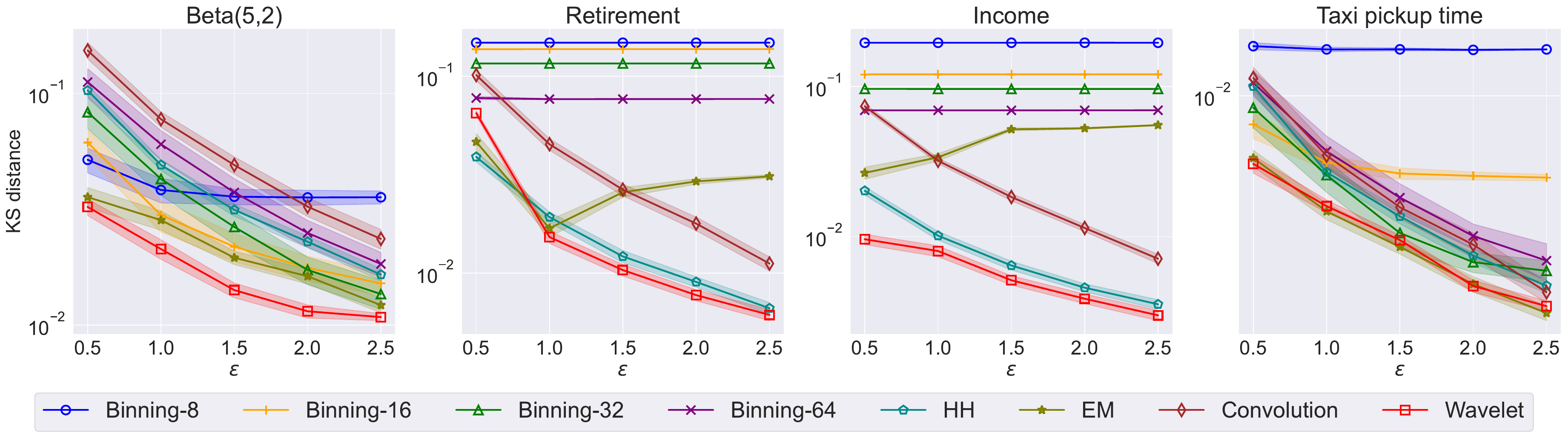}
	\caption{Kolmogorov-Smirnov distances for different datasets.}\label{fig:ksdist}
\end{figure*}

\autoref{fig:wdist} shows the Wasserstein distance between estimated distributions and the ground truth. According to the experimental results, for categorical frequency oracles, when the sample size $n$ is small (such as the $Beta(5,2)$ dataset) and $\epsilon$ is small, a small $d$ is more preferable, as a large bin size reduces the estimation variance. On the contrary, For large $n$ and $\epsilon$, frequency oracles with a large $d$ performs better, due to relatively smaller bias. However, even under the optimal choice of $d$, the performances are still far from optimal. HH \cite{cormode2021frequency} and SW with EM \cite{li2020estimating} significantly improve the performance compared with categorical frequency oracles.

Compared with these baseline methods, our wavelet expansion method performs consistently better. In particular, compared with SW with EM, the advantage of our method is especially obvious for non-smooth datasets, including Retirement and Income, with relatively large $\epsilon$. 

\autoref{fig:ksdist} shows the KS distance. Our wavelet expansion approach still performs better than all existing methods. For datasets with smooth distributions such as $Beta(5,2)$ and Taxi pickup time, the KS distance converges to $0$ with the increase of $\epsilon$ in general. However, for non-smooth datasets, including the Retirement and the Income datasets, which have many spikes, the KS distances of both categorical frequency oracles and SW with EM do not converge to zero. Intuitively, as discussed in \cite{li2020estimating}, this happens because these methods make the distribution more smooth. The smoothing operation leads to large error under KS distance, since it measures the maximum absolute difference between cdfs. Compared with existing methods, our wavelet expansion approach does not rely on the smoothness of the distribution. When the distribution contains some spikes, the wavelet method can preserve them, thus our method exhibits significantly better performance. 

\subsection{Validation of $J$'s Optimality}

 Now we test whether our parameter selection rule derived from the theoretical analysis is also optimal in numerical experiments. In the following experiment, we calculate the Wasserstein distance between estimated distribution and the ground truth with respect to different $J$. We run experiments for all four datasets, in which $J$ ranges from $1$ to $10$. Theoretical bounds are calculated by \eqref{eq:total}. Empirical results are averaged over $100$ random trials. The results are shown in  \autoref{fig:jsearch}. The optimal values of $J$ are highlighted in green and blue dashed curves, for theoretical and empirical results, respectively.

\begin{figure}[h!]
	\includegraphics[width=\linewidth]{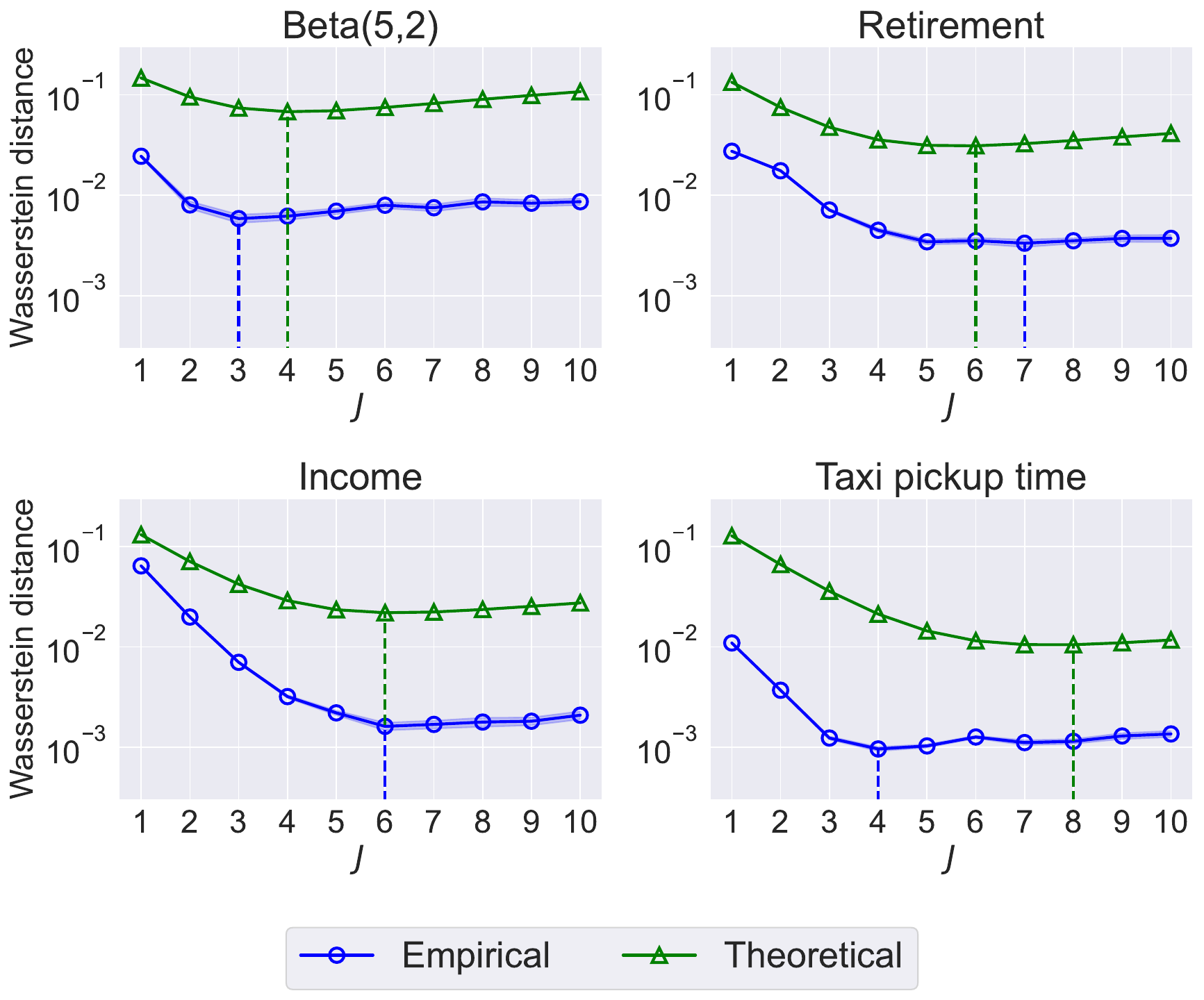}
	\caption{The performance of wavelet expansion from both theoretical analysis and experiments.}\label{fig:jsearch}
\end{figure}

From \autoref{fig:jsearch}, the empirical results are all lower than the theoretical bounds, which validates the correctness of our theoretical analysis. For $Beta(5,2)$, Retirement and Income datasets, The optimal values of $J$ obtained from empirical results are the same as or very close to those obtained from minimizing the theoretical bound \eqref{eq:total}. For the Taxi pickup time dataset, the optimal $J$ from experiments is $4$, significantly smaller than the theoretical results $8$. Our explanation is that while wavelet expansion with smaller $J$ can neglect some details of the distribution and thus results in large bias, for datasets with smooth distributions (such as Taxi pickup time), such additional bias is relatively small. As a result, a smaller $J$ may perform better in practice. 

\subsection{Impact of Smoothness of Distribution}

As discussed earlier in this paper, for SW with EM, as well as all other methods that add noise to samples and attempt to recover the original distribution, the noise blurs the original distribution, which is relatively more harmful for non-smooth distributions. In this experiment, we evaluate the impact of spikes on the performance. We run experiments with the following distribution:
\begin{eqnarray}
	f(x) = \left\{
	\begin{array}{ccc}
		1.5 &\text{if} & \lfloor x/h\rfloor \text{ is odd}\\
		0.5 &\text{if} & \lfloor x/h\rfloor \text{ is even,}
	\end{array}
	\right.
\end{eqnarray}
in which $h$ is $1/2$, $1/4$, $1/8$ and $1/16$, respectively.

\begin{figure*}
	\includegraphics[width=\linewidth]{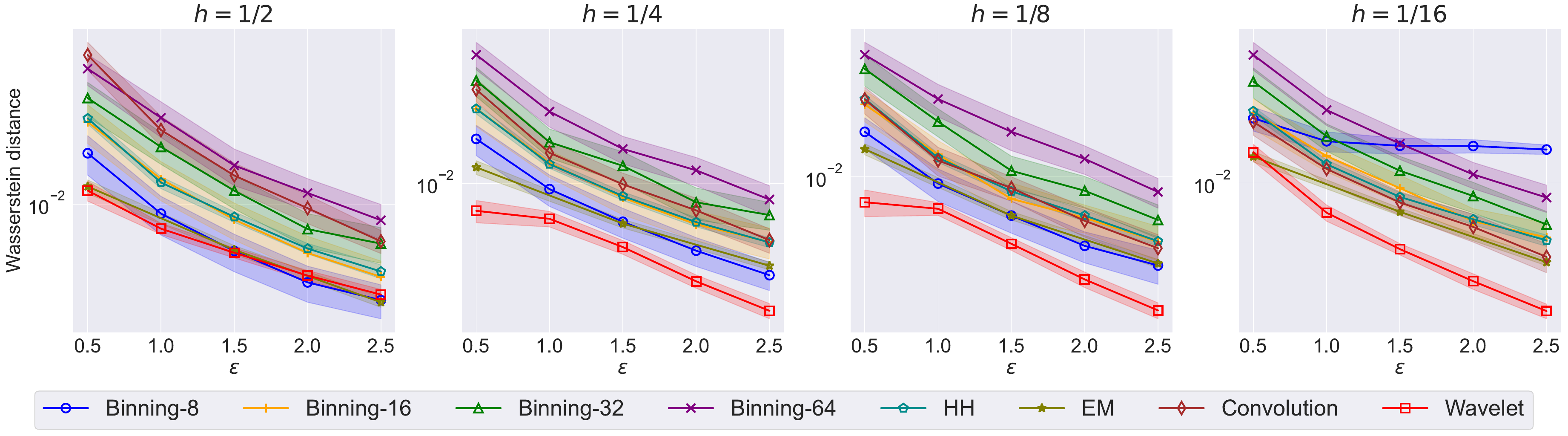}
	\caption{Wasserstein distances with respect to different $h$.}\label{fig:comb}
\end{figure*}

The results are shown in \autoref{fig:comb}. The results show that binning-based approaches do not yield good performance in general. In particular, with $h=1/16$ and $8$ bins (shown in the blue curve in the fourth subfigure), the Wasserstein distance does not converge to zero. SW with EM improves the performance. Compared with SW with EM, our method has made further progress. 

The relative advantage of our method varies among different $h$. In particular, with $h=1/2$, the performance of our new wavelet expansion approach is slightly better than SW with EM, but the advantage is not very obvious. With $h=1/4$ and $h=1/8$, our method begins to exhibit some advantage over SW with EM. With $h=1/16$, which is highly spiky, our new method significantly outperforms existing methods. These results agree with our theoretical analysis.

\subsection{Case Study: Range Query}
\begin{figure*}[h!]
	\includegraphics[width=\linewidth]{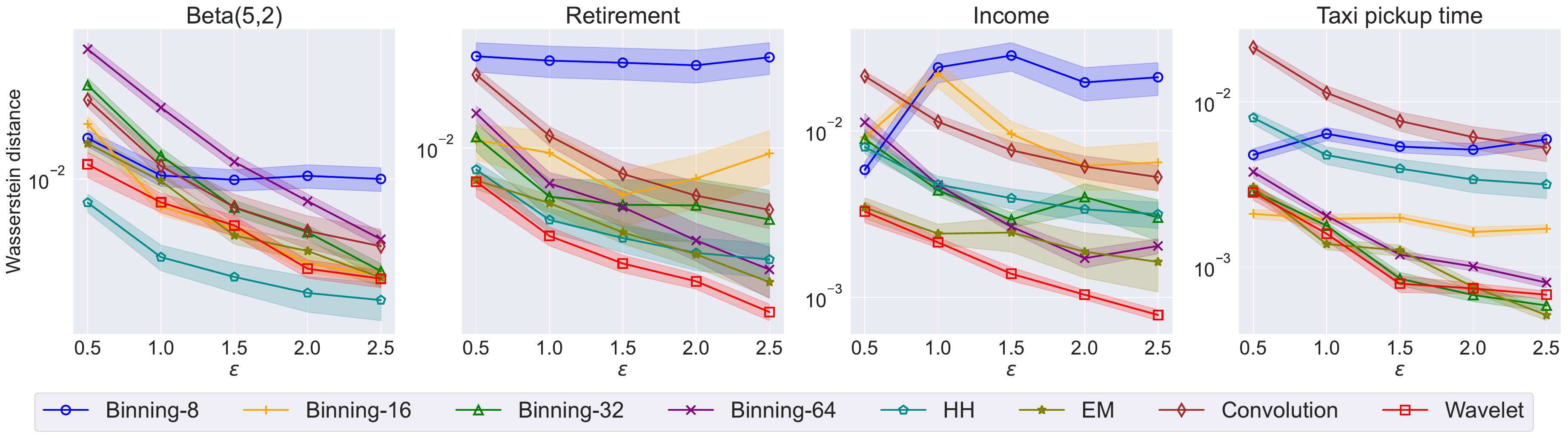}
	\caption{MAE of range query with $\alpha = 0.2$.}\label{fig:rq0.2}
\end{figure*}
\begin{figure*}[h!]
	\includegraphics[width=\linewidth,height=0.27\linewidth]{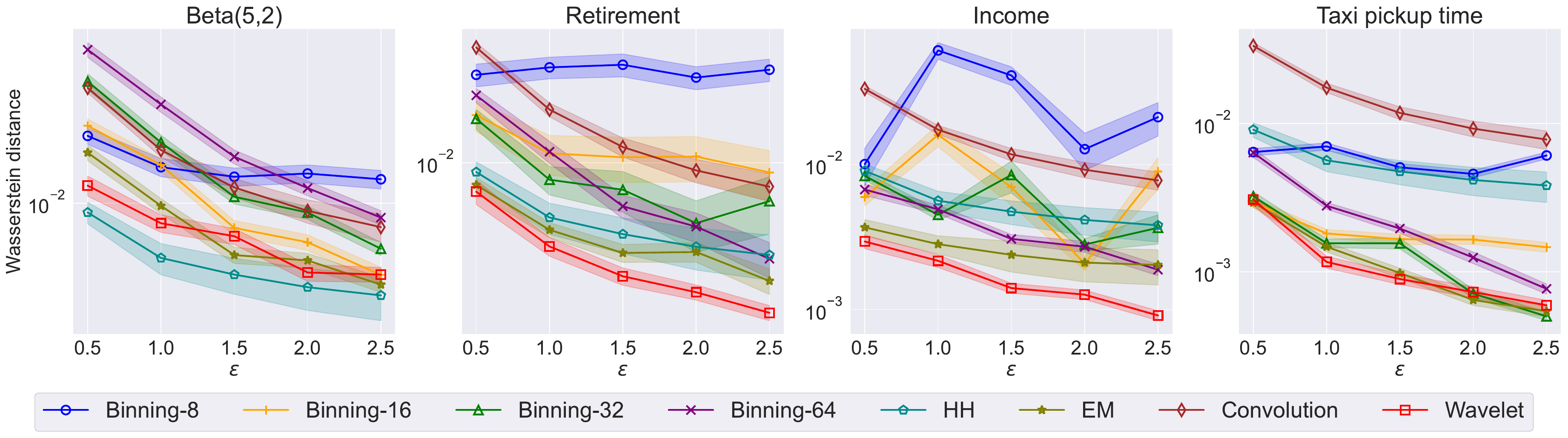}
	\caption{MAE of range query with $\alpha = 0.4$.}\label{fig:rq0.4}
\end{figure*}
Range query is an important application of distribution estimation. In this experiment, we fix $\alpha$ to be $0.2$ and $0.4$, and then randomly generate $a\sim \Unif([0,1-\alpha])$, $b=a+\alpha$. We then calculate the mean absolute error of the range query:
\begin{eqnarray}
	MAE = \left|\hat{F}(b)-\hat{F}(a)-\frac{1}{n}\sum_{i=1}^n \mathbf{1}(a\leq X_i\leq b)\right|,
\end{eqnarray}
in which $\hat{F}(b)-\hat{F}(a)$ is the estimated fraction of samples within $[a,b]$, while $(1/n)\sum_{i=1}^n \mathbf{1}(a\leq X_i\leq b)$ is the ground truth. 

\autoref{fig:rq0.2} and \autoref{fig:rq0.4} show the result with $\alpha = 0.2$ and $\alpha = 0.4$, respectively. The results show that our newly proposed wavelet expansion approach performs best in most cases. For datasets with smooth distributions, including the $Beta(5,2)$ dataset and the Taxi pickup time dataset, SW with EM has similar performance to our method. These results agree with our analysis. The noise mechanism SW makes the distribution smoother, and is thus suitable for smooth distributions. On the contrary, our method does not rely on the smoothness of distribution. Retirement and Income datasets have many spikes that need to be preserved, thus our method performs significantly better for these two datasets.

In general, the numerical experiments verify the effectiveness of our proposed method. For all datasets and all metrics investigated in this paper, the performances of our newly proposed wavelet expansion method are consistently better in most cases. In particular, compared with SW with EM and convolution algorithms, our method is better than preserving spikes in the distribution, thus the advantage is especially obvious for nonsmooth distributions.

\section{Related Work}\label{sec:related}

For estimating the distribution of numerical data, a simple baseline is categorical frequency oracles with binning. We first provide an overview of categorical frequency oracles as well as calibration strategies. We then briefly review several methods to handle ordinal or numerical data. Finally, we discuss other works related to wavelet expansion.

\subsection{Categorical Frequency Oracles}
The frequency estimation problem under LDP can be traced back to the random response technique \cite{warner1965randomized}. In recent years, many frequency oracles have been proposed, such as Rappor \cite{erlingsson2014rappor}, kRR \cite{kairouz2016discrete}, OUE and OLH \cite{wang2017locally}. kRR is suitable for large $\epsilon$, while OUE is suitable for small $\epsilon$. Several works attempt to achieve optimal frequency estimation at all privacy levels. \cite{ye2018optimal} provides the subset selection approach, and proves its minimax optimality. \cite{acharya2019hadamard} proposes the Hadamard response method, which further reduces the communication cost compared with the subset selection. Recently, several other categorical frequency oracles have been proposed. Fast local hashing \cite{cormode2021frequency} reduces the computation cost of OLH, at the cost of slight accuracy loss. Projective geometry response \cite{feldman2022private} is a new method that achieves similar accuracy to OUE and reduces the computation cost. \cite{fang2023locally} proposes a convolution approach to further improve the performance of frequency estimation for smooth datasets. There are also some hierarchy-based methods, such as \cite{hay2010boosting,qardaji2013understanding}. Moreover, there are also some works focusing on the heavy-hitter problem under LDP, which is closely related to frequency estimation \cite{bassily2015local,bassily2017practical}. There are also several other works that focus on frequency estimation over set-valued data \cite{wang2018privset}. Moreover, some works extend the analysis to user-level data \cite{acharya2023discrete,zhao2024learning,zhao2024huber}

\subsection{Calibration}
The estimated frequency may not sum up to $1$, and probably contains negative values. \cite{lee2015maximum,bassily2019linear} proposes a maximum likelihood method to correct the frequency estimate with alternating direction method of multipliers (ADMM). To cope with the computational issue of ADMM in high dimensionality, \cite{mckenna2019graphical} proposes a gradient descent-based method. \cite{wang2019locally} proposes a correction method to achieve consistent frequency estimation. It converts negative and small estimated frequency values to zero and ensure that the final sum is $1$. \cite{jia2019calibrate} designs a calibration approach, which assumes that the true frequencies follow a certain type of distribution with unknown parameters, and then conducts parameter estimation. Other normalization techniques have been developed and analyzed in \cite{hay2010boosting,wang2018privtrie}

\subsection{Handling Ordinal or Numerical Data}
When the data is ordinal or numerical, the simplest method is to bucketize the data into bins and just apply categorical frequency oracles. \cite{wang2017local} proposes an estimator under a refined DP definition, which requires two output distributions to be indistinguishable only when inputs are close to each other. There are also some other mechanisms that add noise to each numerical sample, such as stochastic rounding \cite{ding2017collecting,duchi2018minimax} and piecewise mechanism \cite{wang2019collecting}.

\subsection{Wavelets in DP}

Without privacy requirements, \cite{weed2019estimation} and \cite{niles2022minimax} have shown that the distribution of a random variable can be estimated via wavelet expansion, which achieves minimax optimal convergence rate of the Wasserstein distance between the estimated distribution and the ground truth. \cite{xiao2010differential} applies wavelet to frequency estimation under DP. Our work is different from \cite{xiao2010differential} in two aspects. Firstly, the method in \cite{xiao2010differential} was designed under central DP, while our method is designed for local DP. Secondly, \cite{xiao2010differential} focuses on categorical data, while our work is designed for numerical data, and the metric is changed to Wasserstein and KS distance. This work was then improved in \cite{xiao2014differentially}.

\nocite{zhao2025enhancing,ye2023stateful}

\section{Conclusion}\label{sec:conc}

In this paper, we have proposed a wavelet expansion approach to estimating numerical distributions under LDP. For numerical distributions, Wasserstein and KS distances are more appropriate. These metrics penalize heavily on misplacement of probability mass for a large distance. The proposed method works by prioritizing the estimation of low-order coefficients, and thus ensures accurate estimation at macroscopic level. We have also provided a theoretical analysis, which gives a convergence guarantee and provides guidelines on the parameter selection. Finally, we have run numerical experiments to validate our proposed approach. The result shows that our new proposed method consistently outperforms existing methods, including categorical frequency oracles, as well as SW with EM. The advantage of our method is especially obvious for non-smooth distributions.

\section*{Acknowledgements}

The work of Puning Zhao was supported in party by Open Research Projects of the Key Laboratory of Blockchain Technology and Data Security of the Ministry of Industry and Information Technology for the year 2025 (KT20250015), and the Open Research Fund of
The State Key Laboratory of Blockchain and Data Security, Zhejiang University. The work of Zhikun Zhang was supported in part by the NSFC under Grants No. 62402431, 62441618, and Zhejiang University Education Foundation Qizhen Scholar Foundation. The work of Li Shen was supported by the NSFC Grant (No. 62576364), Shenzhen Basic Research Project (Natural Science Foundation) Basic Research Key Project (NO. JCYJ20241202124430041), and   Major Project in Judicial Research from Supreme People's Court (NO. GFZDKT2024C08-3). The work of Shaowei Wang was supported by National Natural Science Foundation of China (No.62372120, 62102108), GuangDong Basic and Applied Basic Research Foundation (No.2022A1515010061), and Science and Technology Projects in Guangzhou (No.2025A03J3182). The work of Zhe Liu was supported by the National Natural Science Foundation of China (62132008, U22B2030), the Natural Science Foundation of Jiangsu Province (BK20220075).

\bibliographystyle{ieeetr}
\bibliography{wavelet}

\begin{thebibliography}{10}

\bibitem{dwork2006calibrating}
C.~Dwork, F.~McSherry, K.~Nissim, and A.~Smith, ``Calibrating noise to
  sensitivity in private data analysis,'' in {\em Theory of Cryptography: Third
  Theory of Cryptography Conference, TCC 2006, New York, NY, USA, March 4-7,
  2006. Proceedings 3}, pp.~265--284, Springer, 2006.

\bibitem{apple}
A.~D.~P. Team, ``Learning with privacy at scale,'' tech. rep., 2017.

\bibitem{erlingsson2014rappor}
{\'U}.~Erlingsson, V.~Pihur, and A.~Korolova, ``Rappor: Randomized aggregatable
  privacy-preserving ordinal response,'' in {\em Proceedings of the 2014 ACM
  SIGSAC conference on computer and communications security}, pp.~1054--1067,
  2014.

\bibitem{ding2017collecting}
B.~Ding, J.~Kulkarni, and S.~Yekhanin, ``Collecting telemetry data privately,''
  {\em Advances in Neural Information Processing Systems}, vol.~30, 2017.

\bibitem{yang2024local}
M.~Yang, T.~Guo, T.~Zhu, I.~Tjuawinata, J.~Zhao, and K.-Y. Lam, ``Local
  differential privacy and its applications: A comprehensive survey,'' {\em
  Computer Standards \& Interfaces}, vol.~89, p.~103827, 2024.

\bibitem{gu2019supporting}
X.~Gu, M.~Li, Y.~Cao, and L.~Xiong, ``Supporting both range queries and
  frequency estimation with local differential privacy,'' in {\em 2019 IEEE
  Conference on Communications and Network Security (CNS)}, pp.~124--132, IEEE,
  2019.

\bibitem{bassily2015local}
R.~Bassily and A.~Smith, ``Local, private, efficient protocols for succinct
  histograms,'' in {\em Proceedings of the forty-seventh annual ACM symposium
  on Theory of computing}, pp.~127--135, 2015.

\bibitem{kairouz2016discrete}
P.~Kairouz, K.~Bonawitz, and D.~Ramage, ``Discrete distribution estimation
  under local privacy,'' in {\em International Conference on Machine Learning},
  pp.~2436--2444, PMLR, 2016.

\bibitem{wang2017locally}
T.~Wang, J.~Blocki, N.~Li, and S.~Jha, ``Locally differentially private
  protocols for frequency estimation,'' in {\em 26th USENIX Security Symposium
  (USENIX Security 17)}, pp.~729--745, 2017.

\bibitem{acharya2019hadamard}
J.~Acharya, Z.~Sun, and H.~Zhang, ``Hadamard response: Estimating distributions
  privately, efficiently, and with little communication,'' in {\em The 22nd
  International Conference on Artificial Intelligence and Statistics},
  pp.~1120--1129, PMLR, 2019.

\bibitem{ye2018optimal}
M.~Ye and A.~Barg, ``Optimal schemes for discrete distribution estimation under
  locally differential privacy,'' {\em IEEE Transactions on Information
  Theory}, vol.~64, no.~8, pp.~5662--5676, 2018.

\bibitem{wang2020locally}
T.~Wang, M.~Lopuhaa-Zwakenberg, Z.~Li, B.~Skoric, and N.~Li, ``Locally
  differentially private frequency estimation with consistency,'' in {\em
  NDSS'20: Proceedings of the NDSS Symposium}, 2020.

\bibitem{fang2023locally}
H.~Fang, L.~Chen, Y.~Liu, and Y.~Gao, ``Locally differentially private
  frequency estimation based on convolution framework,'' in {\em 2023 IEEE
  Symposium on Security and Privacy (SP)}, pp.~2208--2222, IEEE, 2023.

\bibitem{christin2016privacy}
D.~Christin, ``Privacy in mobile participatory sensing: Current trends and
  future challenges,'' {\em Journal of Systems and Software}, vol.~116,
  pp.~57--68, 2016.

\bibitem{li2020estimating}
Z.~Li, T.~Wang, M.~Lopuha{\"a}-Zwakenberg, N.~Li, and B.~{\v{S}}koric,
  ``Estimating numerical distributions under local differential privacy,'' in
  {\em Proceedings of the 2020 ACM SIGMOD International Conference on
  Management of Data}, pp.~621--635, 2020.

\bibitem{li2021trade}
M.~Li, Y.~Tian, J.~Zhang, D.~Fan, and D.~Zhao, ``The trade-off between privacy
  and utility in local differential privacy,'' in {\em 2021 International
  Conference on Networking and Network Applications (NaNA)}, pp.~373--378,
  IEEE, 2021.

\bibitem{du2024numerical}
L.~Du, P.~Cheng, L.~Zheng, X.~Lian, L.~Chen, W.~Xi, and W.~Ni, ``Numerical
  estimation of spatial distributions under differential privacy,'' {\em arXiv
  preprint arXiv:2412.06541}, 2024.

\bibitem{feldman2024instance}
V.~Feldman, A.~McMillan, S.~Sivakumar, and K.~Talwar, ``Instance-optimal
  private density estimation in the wasserstein distance,'' {\em arXiv preprint
  arXiv:2406.19566}, 2024.

\bibitem{rabin2012wasserstein}
J.~Rabin, G.~Peyr{\'e}, J.~Delon, and M.~Bernot, ``Wasserstein barycenter and
  its application to texture mixing,'' in {\em Scale Space and Variational
  Methods in Computer Vision: Third International Conference, SSVM 2011,
  Ein-Gedi, Israel, May 29--June 2, 2011, Revised Selected Papers 3},
  pp.~435--446, Springer, 2012.

\bibitem{massey1951kolmogorov}
F.~J. Massey~Jr, ``The kolmogorov-smirnov test for goodness of fit,'' {\em
  Journal of the American statistical Association}, vol.~46, no.~253,
  pp.~68--78, 1951.

\bibitem{warner1965randomized}
S.~L. Warner, ``Randomized response: A survey technique for eliminating evasive
  answer bias,'' {\em Journal of the American statistical association},
  vol.~60, no.~309, pp.~63--69, 1965.

\bibitem{zhang2019wavelet}
D.~Zhang and D.~Zhang, ``Wavelet transform,'' {\em Fundamentals of image data
  mining: Analysis, Features, Classification and Retrieval}, pp.~35--44, 2019.

\bibitem{villani2009optimal}
C.~Villani {\em et~al.}, {\em Optimal transport: old and new}, vol.~338.
\newblock Springer, 2009.

\bibitem{wang2018locally}
T.~Wang, N.~Li, and S.~Jha, ``Locally differentially private frequent itemset
  mining,'' in {\em 2018 IEEE Symposium on Security and Privacy (SP)},
  pp.~127--143, IEEE, 2018.

\bibitem{wang2019local}
S.~Wang, L.~Huang, Y.~Nie, X.~Zhang, P.~Wang, H.~Xu, and W.~Yang, ``Local
  differential private data aggregation for discrete distribution estimation,''
  {\em IEEE Transactions on Parallel and Distributed Systems}, vol.~30, no.~9,
  pp.~2046--2059, 2019.

\bibitem{cover1999elements}
T.~M. Cover, {\em Elements of information theory}.
\newblock John Wiley \& Sons, 1999.

\bibitem{wang2019locally}
T.~Wang, M.~Lopuha{\"a}-Zwakenberg, Z.~Li, B.~Skoric, and N.~Li, ``Locally
  differentially private frequency estimation with consistency,'' {\em arXiv
  preprint arXiv:1905.08320}, 2019.

\bibitem{wang2019collecting}
N.~Wang, X.~Xiao, Y.~Yang, J.~Zhao, S.~C. Hui, H.~Shin, J.~Shin, and G.~Yu,
  ``Collecting and analyzing multidimensional data with local differential
  privacy,'' in {\em 2019 IEEE 35th International Conference on Data
  Engineering (ICDE)}, pp.~638--649, IEEE, 2019.

\bibitem{duchi2018minimax}
J.~C. Duchi, M.~I. Jordan, and M.~J. Wainwright, ``Minimax optimal procedures
  for locally private estimation,'' {\em Journal of the American Statistical
  Association}, vol.~113, no.~521, pp.~182--201, 2018.

\bibitem{zhao2025attack}
P.~Zhao, Z.~Zhang, J.~Dong, J.~Wu, S.~Wang, Z.~Liu, and Y.~Gao, ``An
  attack-agnostic defense framework against manipulation attacks under local
  differential privacy,'' in {\em 2025 IEEE Symposium on Security and Privacy
  (SP)}, pp.~3858--3876, IEEE Computer Society, 2025.

\bibitem{retirement}
``{SF} employee compensation.''
  https://www.kaggle.com/datasets/san-francisco/sf-employee-compensation.

\bibitem{ruggles2015integrated}
S.~Ruggles, K.~Genadek, R.~Goeken, J.~Grover, and M.~Sobek, ``Integrated public
  use microdata series,'' 2015.

\bibitem{taxi}
``{TLC} trip record data.''
  https://www.nyc.gov/site/tlc/about/tlc-trip-record-data.page.

\bibitem{kulkarni2019answering}
T.~Kulkarni, ``Answering range queries under local differential privacy,'' in
  {\em Proceedings of the 2019 International Conference on Management of Data},
  pp.~1832--1834, 2019.

\bibitem{cormode2021frequency}
G.~Cormode, S.~Maddock, and C.~Maple, ``Frequency estimation under local
  differential privacy,'' {\em Proceedings of the VLDB Endowment}, vol.~14,
  no.~11, pp.~2046--2058, 2021.

\bibitem{feldman2022private}
V.~Feldman, J.~Nelson, H.~Nguyen, and K.~Talwar, ``Private frequency estimation
  via projective geometry,'' in {\em International Conference on Machine
  Learning}, pp.~6418--6433, PMLR, 2022.

\bibitem{hay2010boosting}
M.~Hay, V.~Rastogi, G.~Miklau, and D.~Suciu, ``Boosting the accuracy of
  differentially private histograms through consistency,'' {\em Proceedings of
  the VLDB Endowment}, vol.~3, no.~1, 2010.

\bibitem{qardaji2013understanding}
W.~Qardaji, W.~Yang, and N.~Li, ``Understanding hierarchical methods for
  differentially private histograms,'' {\em Proceedings of the VLDB Endowment},
  vol.~6, no.~14, pp.~1954--1965, 2013.

\bibitem{bassily2017practical}
R.~Bassily, K.~Nissim, U.~Stemmer, and A.~Guha~Thakurta, ``Practical locally
  private heavy hitters,'' {\em Advances in Neural Information Processing
  Systems}, vol.~30, 2017.

\bibitem{wang2018privset}
S.~Wang, L.~Huang, Y.~Nie, P.~Wang, H.~Xu, and W.~Yang, ``Privset: Set-valued
  data analyses with locale differential privacy,'' in {\em IEEE INFOCOM
  2018-IEEE Conference on Computer Communications}, pp.~1088--1096, IEEE, 2018.

\bibitem{acharya2023discrete}
J.~Acharya, Y.~Liu, and Z.~Sun, ``Discrete distribution estimation under
  user-level local differential privacy,'' in {\em International Conference on
  Artificial Intelligence and Statistics}, pp.~8561--8585, PMLR, 2023.

\bibitem{zhao2024learning}
P.~Zhao, L.~Shen, R.~Fan, Q.~Li, H.~Wu, J.~Wu, and Z.~Liu, ``Learning with
  user-level local differential privacy,'' {\em arXiv preprint
  arXiv:2405.17079}, 2024.

\bibitem{zhao2024huber}
P.~Zhao, L.~Lai, L.~Shen, Q.~Li, J.~Wu, and Z.~Liu, ``A huber loss minimization
  approach to mean estimation under user-level differential privacy,'' {\em
  Advances in Neural Information Processing Systems}, vol.~37,
  pp.~130018--130056, 2024.

\bibitem{lee2015maximum}
J.~Lee, Y.~Wang, and D.~Kifer, ``Maximum likelihood postprocessing for
  differential privacy under consistency constraints,'' in {\em Proceedings of
  the 21th ACM SIGKDD International Conference on Knowledge Discovery and Data
  Mining}, pp.~635--644, 2015.

\bibitem{bassily2019linear}
R.~Bassily, ``Linear queries estimation with local differential privacy,'' in
  {\em The 22nd International Conference on Artificial Intelligence and
  Statistics}, pp.~721--729, PMLR, 2019.

\bibitem{mckenna2019graphical}
R.~McKenna, D.~Sheldon, and G.~Miklau, ``Graphical-model based estimation and
  inference for differential privacy,'' in {\em International Conference on
  Machine Learning}, pp.~4435--4444, PMLR, 2019.

\bibitem{jia2019calibrate}
J.~Jia and N.~Z. Gong, ``Calibrate: Frequency estimation and heavy hitter
  identification with local differential privacy via incorporating prior
  knowledge,'' in {\em IEEE INFOCOM 2019-IEEE Conference on Computer
  Communications}, pp.~2008--2016, IEEE, 2019.

\bibitem{wang2018privtrie}
N.~Wang, X.~Xiao, Y.~Yang, T.~D. Hoang, H.~Shin, J.~Shin, and G.~Yu,
  ``Privtrie: Effective frequent term discovery under local differential
  privacy,'' in {\em 2018 IEEE 34th International Conference on Data
  Engineering (ICDE)}, pp.~821--832, IEEE, 2018.

\bibitem{wang2017local}
S.~Wang, Y.~Nie, P.~Wang, H.~Xu, W.~Yang, and L.~Huang, ``Local private ordinal
  data distribution estimation,'' in {\em IEEE INFOCOM 2017-IEEE Conference on
  Computer Communications}, pp.~1--9, IEEE, 2017.

\bibitem{weed2019estimation}
J.~Weed and Q.~Berthet, ``Estimation of smooth densities in wasserstein
  distance,'' in {\em conference on Learning Theory}, pp.~3118--3119, PMLR,
  2019.

\bibitem{niles2022minimax}
J.~Niles-Weed and Q.~Berthet, ``Minimax estimation of smooth densities in
  wasserstein distance,'' {\em The Annals of Statistics}, vol.~50, no.~3,
  pp.~1519--1540, 2022.

\bibitem{xiao2010differential}
X.~Xiao, G.~Wang, and J.~Gehrke, ``Differential privacy via wavelet
  transforms,'' {\em IEEE Transactions on knowledge and data engineering},
  vol.~23, no.~8, pp.~1200--1214, 2010.

\bibitem{xiao2014differentially}
X.~Xiao, ``Differentially private data release: Improving utility with wavelets
  and bayesian networks,'' in {\em Web Technologies and Applications: 16th
  Asia-Pacific Web Conference, APWeb 2014, Changsha, China, September 5-7,
  2014. Proceedings 16}, pp.~25--35, Springer, 2014.

\bibitem{zhao2025enhancing}
P.~Zhao, J.~Wu, Z.~Liu, L.~Shen, Z.~Zhang, R.~Fan, L.~Sun, and Q.~Li,
  ``Enhancing learning with label local differential privacy by vector
  approximation,'' in {\em 13th International Conference on Learning
  Representations, ICLR 2025}, pp.~46914--46931, International Conference on
  Learning Representations, ICLR, 2025.

\bibitem{ye2023stateful}
Q.~Ye, H.~Hu, K.~Huang, M.~H. Au, and Q.~Xue, ``Stateful switch: Optimized time
  series release with local differential privacy,'' in {\em IEEE INFOCOM
  2023-IEEE Conference on Computer Communications}, pp.~1--10, IEEE, 2023.

\end{thebibliography}

\appendices

\section{Proof of Lemma \ref{lem:approx}}\label{sec:approx}

We prove Lemma \autoref{lem:approx} by induction.

(1) For $J=0$. We need to show that $F_0^*(x) = F(x)$ for $x=0,1/2,1$. \eqref{eq:fjstar} ensures that $f_j^*$ is a valid pdf, thus $F_0^*(0) = 0 = F(0)$, $F_0^*(1)=1 = F(1)$. For $x=1/2$, note that
\begin{eqnarray}
	f_0^*(x)=1+a_{00}^* \psi_{00}(x),
\end{eqnarray}
with
\begin{eqnarray}
	a_{00}^* &=& \frac{1}{n}\sum_{i=1}^n \psi_{00}(X_i) \nonumber\\
	&=& \frac{1}{n}\sum_{i=1}^n \mathbf{1}\left(X_i<\frac{1}{2}\right)+\frac{1}{n}\sum_{i=1}^n \mathbf{1}\left(X_i\geq \frac{1}{2}\right).\nonumber\\
\end{eqnarray}
Then
\begin{eqnarray}
	F_0\left(\frac{1}{2}\right) &=& \int_0^\frac{1}{2} f_0^*(t) dt\nonumber\\
	&=& \int_0^\frac{1}{2}(1+a_{00}^* \psi_{00}(t)) dt\nonumber\\
	&=&\frac{1}{2}(1+a_{00}^*)\nonumber\\
	&=&\frac{1}{n}\sum_{i=1}^n \mathbf{1}\left(X_i<\frac{1}{2}\right) \nonumber\\
	&=& F\left(\frac{1}{2}\right).
\end{eqnarray}
(2) Suppose that Lemma \autoref{lem:approx} holds up to $J$. Then for the $J+1$-th expansion, we need to show that $F_{J+1}^*(x)=F(x)$ for $x=k2^{-(J+2)}$, $k=0,\ldots, 2^{J+2}$. We discuss the case with even and odd $k$ separately.

If $k$ is even, then 
\begin{eqnarray}
	F_{J+1}^*(x)&=&\int_0^x f_{J+1}^*(t) dt\nonumber\\
	&=&\int_0^x \left[1+\sum_{j=0}^{J+1}\sum_{k=0}^{2^j - 1}a_{jk}^* \psi_{jk}(t)\right] dt\nonumber\\
	&=& F_J^*(x)+\int_0^x a_{J+1,k}^* \psi_{J+1,k}(t) dt.
\end{eqnarray}
Recall the definition of $\psi$ in \eqref{eq:psijk}, for $k=0,1,\ldots, 2^{J+1} - 1$,
\begin{eqnarray}
	\int_{k2^{-(J+1)}}^{(k+1)2^{-(J+1)}}\psi_{J+1,k}(t) dt=0.
\end{eqnarray}
Therefore
\begin{eqnarray}
	\int_0^{k2^{-(J+1)}} \psi_{J+1, k}(t)dt = 0.
	\label{eq:int0}
\end{eqnarray}
Since $x=k2^{-(J+2)}=(k/2)2^{-(J+1)}$, with $k/2$ being an integer, we have
\begin{eqnarray}
	\int_0^x \psi_{J+1, k}(t) dt = 0.
\end{eqnarray}
Therefore
\begin{eqnarray}
	F_{J+1}^*(x) = F_J^*(x) = F(x),
\end{eqnarray}
in which the second equality holds by induction.

If $k$ is odd, then
\begin{eqnarray}
	&&F_{J+1}^*(x)\nonumber\\
	&=&F_J^*(x)+\int_0^x \sum_{k=0}^{2^{J+1} - 1}a_{J+1, k}^* \psi_{J+1, k}(t) dt\nonumber\\
	&\overset{(a)}{=} & F_J^*(x)+\int_{(k-1)2^{-(J+2)}}^{k2^{-(J+2)}}a_{J+1,k}^* \psi_{J+1,k}(t) dt\nonumber\\
	&\overset{(b)}{=} & F_J^*(x)+2^{-(J+2)}2^{\frac{J+1}{2}}\frac{1}{n}\sum_{i=1}^n \psi_{J+1,k}(X_i)\nonumber\\
	&\overset{(c)}{=} & \frac{1}{2}F_J^*((k-1)2^{-(J+2)})+\frac{1}{2}F_J^*((k+1)2^{-(J+2)})\nonumber\\
	&&+\frac{1}{2}\left[\frac{1}{n}\sum_{i=1}^n \mathbf{1}((k-1)2^{-(J+2)}\leq X_i<k2^{-(J+2)})\right.\nonumber\\
	&&\left.-\frac{1}{n}\sum_{i=1}^n \mathbf{1}(k2^{-(J+2)}\leq X_i<(k+1)2^{-(J+2)})\right]\nonumber\\
	&\overset{(d)}{=} & \frac{1}{2}F_J^*((k-1)2^{-(J+2)})+\frac{1}{2}F_J^*((k+1)2^{-(J+2)})\nonumber\\
	&&+\frac{1}{2}\left[2F(k2^{-(J+2)})-F((k-1)2^{-(J+2)})\right.\nonumber\\
	&&\left.-F((k+1)2^{-(J+2)})\right]\nonumber\\
	&=& F(k2^{-(J+2)})\nonumber\\
	&=& F(x).
\end{eqnarray}
(a) uses \eqref{eq:int0}. For odd $k$, $k-1$ is even, thus
\begin{eqnarray}
	\int_0^{(k-1)2^{-(J+2)}} a_{J+1,k}^* \psi_{J+1,k}(t) dt = 0.
\end{eqnarray}
(b) holds because $\psi_{J+1, k}(t) = 2^\frac{J+1}{2}$ for $t\in [(k-1)2^{-(J+2)}, k2^{-(J+2)}]$.

(c) holds because $f_J^*(x)$ is uniform in $[k2^{-(J+1)}, (k+1)2^{-(J+1)}]$. Therefore, for any $k=0,1,\ldots, 2^{J+1} - 1$, $F_J^*((k+1/2)2^{-(J+1)})=(F_J^*(k2^{-(J+1)}) + F_J^*((k+1)2^{-(J+1)}))/2$.

(d) holds because $k-1$ and $k+1$ are even, thus $F_J^*((k-1)2^{-(J+2)})=F((k-1)2^{-(J+2)})$, $F_J^*((k+1)2^{-(J+2)})=F((k+1)2^{-(J+2)})$. 

Now we have proved that $F_{J+1}^*(x)=F(x)$ for $x=k2^{-(J+2)}$ for both even and odd $k$. The proof of Lemma \autoref{lem:approx} is complete.
\section{Proof of Lemma \ref{lem:pq}}\label{sec:pq}

\textbf{Proof that \eqref{eq:pq} satisfies $\epsilon$-LDP.} From \eqref{eq:pq}, for any two vectors $\mathbf{v}$ and $\mathbf{v}'$,
\begin{eqnarray}
	\frac{p_Q(\mathbf{y}|\mathbf{v})}{p_Q(\mathbf{y}|\mathbf{v}')}\leq e^\epsilon.
\end{eqnarray}

\textbf{Calculation of $p$}. 
\begin{eqnarray}
	\text{P}(Y(k)=1|v(k)=1) &=& \sum_{\mathbf{y}\in \mathcal{Y}} \mathbf{1}(y(k)=1)\frac{e^\epsilon}{\Omega}\nonumber\\
	&=& \binom{d-1}{m-1} 2^{m-1} \frac{e^\epsilon}{\Omega}.
\end{eqnarray}

\textbf{Calculation of $q$}. Now we calculate $\text{P}(Y(k)=1|v(k)=0)$. Recall that $\mathbf{v}$ has only one nonzero element. Without loss of generality, suppose $v(l)=1$ for $l\neq j$. Then
\begin{eqnarray}
	&&\text{P}(Y(k)=1|v(k)=0, v(l)=1)\nonumber\\
	&=& \sum_{\mathbf{y}\in \mathcal{Y}} \mathbf{1}(y(k)=1, y(l)=1)\frac{e^\epsilon}{\Omega} \nonumber\\
	&&+ \sum_{\mathbf{y}\in \mathcal{Y}} \mathbf{1}(y(k)=1, y(l)=0)\frac{1}{\Omega}\nonumber\\
	&&+\sum_{\mathbf{y}\in \mathcal{Y}} \mathbf{1}(y(k)=1, y(l)=-1)\frac{1}{\Omega}.
\end{eqnarray}
By combination rules,
\begin{eqnarray}
	\sum_{\mathbf{y}\in \mathcal{Y}} \mathbf{1}(y(k)=1, y(l)=1) &=& \binom{d-2}{m-2} 2^{m-2},\\
	\sum_{\mathbf{y}\in \mathcal{Y}} \mathbf{1}(y(k)=1, y(l)=0) &=& \binom{d-2}{m-1} 2^{m-1},\\
	\sum_{\mathbf{y}\in \mathcal{Y}} \mathbf{1}(y(k)=1, y(l)=-1)&=&\binom{d-2}{m-2} 2^{m-2}.
\end{eqnarray}
Therefore
\begin{eqnarray}
	&&\text{P}(Y(k)=1|v(k)=0, v(l)=1) \nonumber\\
	&&= \binom{d-2}{m-2} 2^{m-2} \frac{e^\epsilon}{\Omega} +\binom{d-2}{m-2} 2^{m-2} \frac{1}{\Omega} \nonumber\\
	&&+ \binom{d-2}{m-1} 2^{m-1} \frac{1}{\Omega}.
\end{eqnarray}
The above equation holds for all $l\neq k$. Hence
\begin{eqnarray}
	\text{P}(Y(k)=1|v(k)=0) = q.
\end{eqnarray}
Finally, 
\begin{eqnarray}
	\text{P}(Y(k)=1|v(k)=-1)&=&\sum_{\mathbf{y}\in \mathcal{Y}} \mathbf{1}(y(k)=1)\frac{1}{\Omega} \nonumber\\
	&=& \frac{1}{\Omega}\binom{d-1}{m-1} 2^{m-1}\nonumber\\
	&=& pe^{-\epsilon}.
\end{eqnarray}
The proof of Lemma \ref{lem:pq} is complete.

\section{Proof of Lemma \ref{lem:biasvar}}\label{sec:biasvar}
\textbf{Proof of the unbiasedness of $a_{jk}$.} Recall \eqref{eq:ey},
\begin{eqnarray}
	\mathbb{E}[Y_i(k)] = p(1-e^{-\epsilon})v_i(k).
\end{eqnarray}
Thus
\begin{eqnarray}
	\mathbb{E}[a_{jk}] &=& \mathbb{E}\left[\frac{2^{j/2}}{n_j p(1-e^{-\epsilon})} \sum_{i\in S_j} p(1-e^{-\epsilon}) v_i(k)\right]\nonumber\\
	&=& \mathbb{E}\left[\frac{1}{n_j}\sum_{i\in S_j} \psi_{jk}(X_i)\right]\nonumber\\
	&=& \frac{1}{n}\sum_{i=1}^n \psi_{jk}(X_i)\nonumber\\
	&=& a_{jk}^*.
\end{eqnarray}
Therefore $a_{jk}$ is an unbiased estimate of $a_{jk}^*$.

\textbf{The variance of $a_{jk}$.} We use Lemma \ref{lem:pq}. If $v(k)=1$, then
\begin{eqnarray}
	&&\Var[Y(k)|v(k)=1] \nonumber\\
	&=& \mathbb{E}[Y^2(k) |v(k)=1] - (\mathbb{E}[Y(k)|v(k)=1])^2\nonumber\\
	&=& \text{P}(Y(k)=1|v(k)=1) + \text{P}(Y(k)=-1|v(k)=1)\nonumber\\
	&&\hspace{-3mm}- (\text{P}(Y(k)=1|v(k)=1) - \text{P}(Y(k)=-1|v(k)=1))^2\nonumber\\
	&=& p + pe^{-\epsilon} - (p-pe^{-\epsilon})^2.
\end{eqnarray}
If $v(k)=-1$, then
\begin{eqnarray}
	\Var[Y(k)|v(k)=-1] = p(1+e^{-\epsilon}) - p^2 (1-e^{-\epsilon})^2,
\end{eqnarray}
which is the same as the case of $v(k)=1$. 

If $v(k)=0$, then
\begin{eqnarray}
	\Var[Y(k)|v(k)=0] = q.
\end{eqnarray}

Therefore
\begin{eqnarray}
	&&\Var\left[\sum_{i\in S_j} Y_i(k)\right] \nonumber\\
	&=& \sum_{i\in S_j} \left[\mathbf{1}(|v_i(k)|=1) \left[p(1+e^{-\epsilon}) - p^2(1-e^{-\epsilon})^2 \right] \right.\nonumber\\
	&&\left.+ \mathbf{1}(v_i(k)=0)q\right]\nonumber\\
	&=& n_j f_{jk} [p(1+e^{-\epsilon}) - p^2(1-e^{-\epsilon})^2] + n_j (1-f_{jk})q,\nonumber\\
\end{eqnarray}
in which
\begin{eqnarray}
	f_{jk} = \frac{1}{n_j}\sum_{i\in S_j} \mathbf{1}(|v_i(k)|=1).
\end{eqnarray}
Recall Section \ref{sec:encoding}, $\mathbf{v}_i$ is $1$-sparse, thus
\begin{eqnarray}
	\sum_{k=0}^{2^j - 1}f_{jk} = 1.
\end{eqnarray}
Then taking the sum over all $k$ yields
\begin{eqnarray}
	\sum_{k=0}^{2^j - 1}\Var[a_{jk}|S_j] = \frac{2^j}{n_j}\left[\frac{1+e^{-\epsilon}}{p(1-e^{-\epsilon})^2}+\frac{q(d-1)}{p^2(1-e^{-\epsilon})^2}\right].
\end{eqnarray}

\textbf{Proof of overall variance.} Now we analyze the variance including the randomness caused by sample allocation. 
\begin{eqnarray}
	\Var[a_{jk}] = \mathbb{E}[\Var[a_{jk}|S_j]] + \Var[E[a_{jk}|S_j]].
\end{eqnarray}
The first term is already analyzed in our previous step. Now we bound the second term $\Var[E[a_{jk}|S_j]]$. Recall that $\psi_{jk}(X_i)\in \{-2^{j/2}, 0, 2^{j/2}\}$, thus
\begin{eqnarray}
	&&\Var[\psi_{jk}(X_i) \mathbf{1}(i\in S_j)] \nonumber\\
	&&= \left\{
	\begin{array}{ccc}
		2^j p_{ij}(1-p_{ij}) &\text{if} &|\psi_{jk}(X_i)|= 2^{j/2}\\
		0 &\text{if} & \psi_{jk}(X_i) = 0,
	\end{array}
	\right.
\end{eqnarray}
in which $p_{ij}:=\text{P}(i\in S_j)$.

Consider that $\psi_{jk}(X_i) \mathbf{1}(i\in S_j)$ are negatively correlated for different $i$, the total variance is not larger than the sum of their individual variances. Therefore
\begin{eqnarray}
	&&\Var\left[\sum_{i\in S_j} \psi_{jk}(X_i)\right]\nonumber\\
	&& \leq \sum_{i=1}^n \mathbf{1}(\psi_{jk}(X_i)\in \{-2^{j/2}, 2^{j/2}\})2^j \frac{n_j}{n}(1-\frac{n_j}{n}),\nonumber\\
\end{eqnarray}
and
\begin{eqnarray}
	\sum_{k=0}^{2^j - 1}\Var[E[a_{jk}|S_j]] &=& \sum_{k=0}^{2^j - 1}\Var\left[\frac{1}{n_j}\sum_{i\in S_j} \psi_{jk}(X_i)\right]\nonumber\\
	&=& \frac{1}{n_j^2}\sum_{i=1}^n 2^j \frac{n_j}{n}(1-\frac{n_j}{n})\nonumber\\
	&=& \frac{2^j}{n_j}(1-\frac{n_j}{n})\nonumber\\
	&\leq & \frac{2^j}{n_j}.
\end{eqnarray}
The proof of Lemma \ref{lem:biasvar} is complete.

\section{Proof of Lemma \ref{lem:vbound}}\label{sec:vbound}
	(1) If $\epsilon<1$, then let $m=d$. By \eqref{eq:norm}, 
	\begin{eqnarray}
		\Omega = 2^{d-1} (e^\epsilon +1),
	\end{eqnarray}
	and
	\begin{eqnarray}
		p = \frac{1}{\Omega} 2^{d-1} e^\epsilon = \frac{e^\epsilon}{e^\epsilon+1},
	\end{eqnarray}
	\begin{eqnarray}
		q = \frac{1}{\Omega} 2^{d-2} e^\epsilon + \frac{1}{\Omega} 2^{d-2} = \frac{1}{2}.
	\end{eqnarray}
	Thus $V_j = O(2^{2j}/\epsilon^2)$.
	
	(2) If $\epsilon>j$, we let $m=1$. Then $\Omega = e^\epsilon + 2d - 1$, and
	\begin{eqnarray}
		p = \frac{e^\epsilon}{e^\epsilon+2d-1}, q = \frac{1}{e^\epsilon + 2d - 1},
	\end{eqnarray} 
	thus $V_j=O(2^j)$.
	
	(3) If $1<\epsilon<j$, then let $m=[de^{-\epsilon}]$, in which $[\cdot]$ means number rounding. The result shows that $V_j = O(2^j + 2^{2j}/e^\epsilon)$.

\newpage 

\appendices 

\section{Meta-Review}

The following meta-review was prepared by the program committee for the 2026
IEEE Symposium on Security and Privacy (S\&P) as part of the review process as
detailed in the call for papers.

\subsection{Summary}
This paper proposes a wavelet-based technique for estimating numerical distributions under local differential privacy. The paper includes theoretical guarantees for the accuracy and computational efficiency of the approach, as well as experiments which show improved performance compared to prior methods.

\subsection{Scientific Contributions}
\begin{itemize}
	\item Provides a Valuable Step Forward in an Established Field
\end{itemize}

\subsection{Reasons for Acceptance}
\begin{enumerate}
	\item The paper makes a valuable contribution to the established field of local differentially private distribution estimation by extending beyond categorical data to numerical distributions, using a principled wavelet-based approach tailored to the Wasserstein and Kolmogorov–Smirnov error metrics. This addresses an important gap, as frequency and density estimation are core building blocks for many downstream tasks.
	\item Theoretical and empirical contributions are both strong: the paper provides rigorous guarantees for univariate distributions in Wasserstein distance, showing optimal parameter dependence, and presents thorough experiments demonstrating that the method outperforms categorical-data baselines combined with binning, as well as recent EM-based heuristics, with particular advantages on spiky distributions while maintaining competitive performance for smooth ones.
\end{enumerate}

\end{document}